\documentclass{article}

     \PassOptionsToPackage{numbers,compress,square,comma}{natbib}



 \usepackage[final]{neurips_2019}


\usepackage[utf8]{inputenc} 
\usepackage[T1]{fontenc}    
\usepackage{hyperref}       
\usepackage{url}            
\usepackage{booktabs}       
\usepackage{amsfonts}       
\usepackage{nicefrac}       
\usepackage{microtype}      
\usepackage{amssymb, amsthm, amsmath, mathtools, algorithm2e}
\usepackage{romannum}
\AtBeginDocument{\pagenumbering{arabic}} 
\usepackage{subcaption}

\newtheorem{definition}{Definition}

\newtheorem{lemma}{Lemma}
\newtheorem{theorem}[lemma]{Theorem}
\newtheorem{corollary}[lemma]{Corollary}
\newcommand\numberthis{\addtocounter{equation}{1}\tag{\theequation}}

\title{On the Optimality of Perturbations in Stochastic and Adversarial Multi-armed Bandit Problems}

%

\author{%
  Baekjin Kim \\
  Department of Statistics\\
  University of Michigan\\
  Ann Arbor, MI 48109 \\
  \texttt{baekjin@umich.edu} \\
   \And
   Ambuj Tewari \\
  Department of Statistics\\
  University of Michigan\\
  Ann Arbor, MI 48109 \\
   \texttt{tewaria@umich.edu} \\
}

\begin{document}

\maketitle

\begin{abstract}
We investigate the optimality of perturbation based algorithms in the stochastic and adversarial multi-armed bandit problems. For the stochastic case, we provide a unified regret analysis for both sub-Weibull and bounded perturbations when rewards are sub-Gaussian. Our bounds are instance optimal for sub-Weibull perturbations with parameter 2 that also have a matching lower tail bound, and all bounded support perturbations where there is sufficient probability mass at the extremes of the support. For the adversarial setting, we prove rigorous barriers against two natural solution approaches using tools from discrete choice theory and extreme value theory. Our results suggest that the optimal perturbation, if it exists, will be of Fr{\'e}chet-type.
\end{abstract}

\section{Introduction}

Beginning with the seminal work of \citet{hannan1957approximation}, researchers have been interested in algorithms that use {\em random perturbations} to generate a distribution over available actions. \citet{kalai2005efficient} showed that the perturbation idea leads to efficient algorithms for many online learning problems with large action sets. Due to the {\em Gumbel lemma} \citep{POS}, the well known exponential weights algorithm \citep{freund1997decision} also has an interpretation as a perturbation based algorithm that uses Gumbel distributed perturbations.

There have been several attempts to analyze the regret of perturbation based algorithms with specific distributions such as Uniform, Double-exponential, drop-out and random walk (see, e.g., \citep{kalai2005efficient, kujala2005following, devroye2013prediction, van2014follow}). These works provided rigorous guarantees but the techniques they used did not generalize to general perturbations. Recent work \citep{abernethy2014online} provided a general framework to understand general perturbations and clarified the relation between regularization and perturbation by understanding them as different ways to smooth an underlying non-smooth potential function. 

\citet{abernethy2015fighting} extended the analysis of general perturbations to the partial information setting of the adversarial multi-armed bandit problem. They isolated {\em bounded hazard rate} as an important property of a perturbation and gave several examples of perturbations that lead to the near optimal regret bound of $O(\sqrt{K T \log K})$. Since Tsallis entropy regularization can achieve the minimax regret of $O(\sqrt{K T})$ \citep{audibert2009minimax,audibert2010regret}, the question of whether perturbations can match the power of regularizers remained open for the adversarial multi-armed bandit problem.

In this paper, we build upon previous works \citep{abernethy2014online,abernethy2015fighting} in two distinct but related directions. First, we provide the first general result for perturbation algorithms in the {\em stochastic} multi-armed bandit problem. 
We present a unified regret analysis for both sub-Weibull and bounded perturbations when rewards are sub-Gaussian. Our regrets are instance optimal for sub-Weibull perturbations with parameter 2 (with a matching lower tail bound), and all bounded support perturbations where there is sufficient probability mass at the extremes of the support. Since the Uniform and Rademacher distribution are instances of these bounded support perturbations, one of our results is a regret bound for a randomized version of UCB where the algorithm picks a random number in the confidence interval or randomly chooses between lower and upper confidence bounds instead of always picking the upper bound. Our analysis relies on the simple but powerful observation that Thompson sampling with Gaussian priors and rewards can also be interpreted as a perturbation algorithm with Gaussian perturbations. We are able to generalize both the upper bound and lower bound of \citet{agrawal2013further} in two respects; (1) from the special Gaussian perturbation to general sub-Weibull or bounded perturbations, and (2) from the special Gaussian rewards to general sub-Gaussian rewards.

Second, we return to the open problem mentioned above: is there a perturbation that gives us minimax optimality? We do not resolve it but provide rigorous proofs that {\em there are barriers to two natural approaches to solving the open problem}. (A) One cannot simply find a perturbation that is exactly equivalent to Tsallis entropy. This is surprising since Shannon entropy does have an exact equivalent perturbation, viz. Gumbel. (B) One cannot simply do a better analysis of perturbations used before \citep{abernethy2015fighting} and plug the results into their general regret bound to eliminate the extra $O(\sqrt{\log K})$ factor. In proving the first barrier, we use a fundamental result in discrete choice theory. For the second barrier, we rely on tools from extreme value theory.

\section{Problem Setup}\label{sec:setup}
In every round $t$ starting at 1, a learner chooses an action $A_t \in [K] \triangleq \{1, 2, \cdots, K\}$ out of $K$ arms and the environment picks a response in the form of a real-valued reward vector $\mathbf{g}_t \in [0,1]^K$. While the entire reward vector $\mathbf{g}_t$ is revealed to the learner in the full information setting, the learner only receives a reward associated with his choice in the bandit setting, while any information on other arms is not provided. Thus, we denote the reward corresponding to his choice $A_t$ as $X_t = g_{t,A_t}$. 

In stochastic multi-armed bandit, the rewards $g_{t,i}$ are sampled i.i.d from a fixed, but unknown distribution with mean $\mu_i$. Adversarial multi-armed bandit is more general in that all assumptions on how rewards are assigned to arms are dropped. It only assumes that rewards are assigned by an adversary before the interaction begins. Such an adversary is called an {\it oblivious adversary}. In both environments, the learner makes a sequence of decisions $A_t$ based on each history $\mathcal{H}_{t-1} = (A_1, X_1, \cdots, A_{t-1}, X_{t-1})$ to maximize the cumulative reward, $\sum_{t=1}^T X_t$.

As a measure of evaluating a learner, {\it Regret} is the difference between rewards the learner would have received had he played the best in hindsight, and the rewards he actually received. Therefore, minimizing the regret is equivalent to maximizing the expected cumulative reward. We consider the expected regret,
$\mathrm{R}(T) = \mathrm{E}[ \max_{i \in [K]} \sum_{t=1}^T g_{t,i} - \sum_{t=1}^T g_{t,A_t}]$ in adversarial setting, and the pseudo regret,
$\mathrm{R}'(T) 
= T \cdot  \max_{i \in [K]} \mu_i - \mathrm{E}[ \sum_{t=1}^T X_t]$ in stochastic setting. Note that two regrets are the same where an oblivious adversary is considered. An online algorithm is called a {\it no-regret algorithm} if for every adversary, the expected regret with respect to every action $A_t$ is sub-linear in $T$.

We use FTPL (Follow The Perturbed Leader) to denote families of algorithms for both stochastic and adversarial settings. The common core of FTPL algorithms consists in adding random perturbations to the estimates of rewards of each arm prior to computing the current ``the best arm'' (or ``leader''). However, the estimates used are different in the two settings: stochastic setting uses sample means and adversarial setting uses inverse probability weighted estimates. 

\section{Stochastic Bandits}
In this section, we propose FTPL algorithms for stochastic multi-armed bandits and characterize a family of perturbations that make the algorithm instance-optimal in terms of regret bounds. This work is mainly motivated by Thompson Sampling \citep{thompson1933likelihood}, one of the standard algorithms in stochastic settings. We also provide a lower bound for the regret of this FTPL algorithm.

For our analysis, we assume, without loss of generality, that arm 1 is optimal, $\mu_1 = \max_{i \in [K]} \mu_i$, and the sub-optimality gap is denoted as $\Delta_i = \mu_1-\mu_i$. Let $\hat{\mu}_i(t)$ be the average reward received from arm $i$ after round $t$ written formally as
$\hat{\mu}_i(t) = \sum_{s=1}^t \mathrm{I} \{ A_s = i\} X_s/{T_i(t)}$
where $T_i(t) = \sum_{s=1}^t \mathrm{I}\{ A_s = i\}$ is the number of times arm $i$ has been pulled after round $t$. The regret for stochastic bandits can be decomposed into $\mathrm{R}(T) = \sum_{i=1}^K \Delta_i \mathrm{E}[T_i(T)]$. The reward distributions are generally assumed to be sub-Gaussian with parameter 1 \citep{lattimore2018bandit}.
\begin{definition}[sub-Gaussian]\label{def:subg}
A random variable $Z$ with mean $\mu = \mathrm{E} [Z]$ is sub-Gaussian with parameter $\sigma>0$ if it satisfies $\mathrm{P}(|Z - \mu| \ge t) \le \exp(-{t^2}/{(2\sigma^2)})$ for all $t \ge 0$. 
\end{definition}
\begin{lemma} [Hoeffding bound of sub-Gaussian \citep{hoeffding1994probability}] \label{lemma:hoeff} 
Suppose $Z_i$, $i\in [n]$ are i.i.d. random variables with $\mathrm{E}(Z_i) = \mu$ and sub-Gaussian with parameter $\sigma$. Then $\mathrm{P}(\bar{Z}_n - \mu \ge t) \vee \mathrm{P}(\bar{Z}_n- \mu \le -t) \le \exp(-{n t^2}/{(2\sigma^2)})$ for all $t \ge 0$, where $\bar{Z}_n = \sum_{i=1}^n Z_i / n$.
\end{lemma}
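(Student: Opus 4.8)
Although this is a classical fact, here is the route I would follow: the plan is the standard Chernoff (exponential Markov) argument. First I would reduce to a one-sided estimate. Since the tail condition in Definition~\ref{def:subg} is symmetric in sign, each $-Z_i$ is again sub-Gaussian with parameter $\sigma$ and mean $-\mu$, so applying the upper-tail bound to the i.i.d.\ sample $-Z_1,\dots,-Z_n$ yields $\mathrm{P}(\bar Z_n - \mu \le -t) = \mathrm{P}(-\bar Z_n + \mu \ge t) \le e^{-nt^2/(2\sigma^2)}$. Hence it suffices to prove $\mathrm{P}(\bar Z_n - \mu \ge t) \le e^{-nt^2/(2\sigma^2)}$, and after centering we may take $\mu = 0$.

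For the upper tail, fix $\lambda > 0$. Markov's inequality applied to $e^{\lambda \sum_i Z_i}$ together with independence gives
\[
\mathrm{P}\Big(\sum_{i=1}^n Z_i \ge nt\Big) \le e^{-\lambda n t}\,\mathrm{E}\big[e^{\lambda \sum_i Z_i}\big] = e^{-\lambda n t}\,\big(\mathrm{E}[e^{\lambda Z_1}]\big)^n .
\]
The core of the proof is the moment generating function bound $\mathrm{E}[e^{\lambda Z_1}] \le e^{\lambda^2\sigma^2/2}$ for all $\lambda \in \mathbb{R}$. Granting it, the right-hand side equals $\exp\!\big(n(\lambda^2\sigma^2/2 - \lambda t)\big)$; minimizing the exponent over $\lambda > 0$ at $\lambda = t/\sigma^2$ produces exactly $\exp(-nt^2/(2\sigma^2))$, as claimed.

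To derive the MGF bound from the tail condition I would start from the layer-cake identity $\mathrm{E}[e^{\lambda Z_1}] = 1 + \lambda\int_0^\infty e^{\lambda s}\,\mathrm{P}(Z_1 > s)\,ds - \lambda\int_0^\infty e^{-\lambda s}\,\mathrm{P}(Z_1 \le -s)\,ds$ (valid for $\lambda>0$; $\lambda<0$ is symmetric, $\lambda=0$ trivial), drop the nonpositive last term, insert $\mathrm{P}(Z_1 > s) \le e^{-s^2/(2\sigma^2)}$ from Definition~\ref{def:subg}, and complete the square in the resulting Gaussian integral. \textbf{This tail-to-MGF passage is the main obstacle:} carried out naively it only yields $\mathrm{E}[e^{\lambda Z_1}] \le e^{c\lambda^2\sigma^2}$ for an absolute constant $c$ that is in general strictly larger than $1/2$, so to land on the stated constant one must either invoke the sharper form of the tail/MGF equivalence or --- as is standard in the bandit literature (cf.\ \citet{lattimore2018bandit}) --- take sub-Gaussianity directly in its MGF form $\mathrm{E}[e^{\lambda(Z_1-\mu)}] \le e^{\lambda^2\sigma^2/2}$, under which the MGF bound is immediate. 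With the MGF bound in hand, the Chernoff computation above finishes the proof.
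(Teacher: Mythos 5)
The paper does not prove this lemma at all: it is stated as a classical fact with a citation to Hoeffding, so there is no in-paper argument to compare against. Your Chernoff route --- symmetrize to reduce to the upper tail, bound $\mathrm{P}(\sum_i Z_i \ge nt)$ by $e^{-\lambda nt}(\mathrm{E}[e^{\lambda Z_1}])^n$ via Markov and independence, insert the moment generating function bound $\mathrm{E}[e^{\lambda Z_1}]\le e^{\lambda^2\sigma^2/2}$, and optimize at $\lambda = t/\sigma^2$ --- is the standard proof and is correct.

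The one substantive point you raise is also a genuine (if minor) issue with the paper itself, not just with your write-up. Definition~\ref{def:subg} states sub-Gaussianity in tail form, $\mathrm{P}(|Z-\mu|\ge t)\le e^{-t^2/(2\sigma^2)}$, and as you note the tail-to-MGF direction of the equivalence degrades the constant: integrating the tail bound through the layer-cake formula yields $\mathrm{E}[e^{\lambda(Z-\mu)}]\le e^{c\lambda^2\sigma^2}$ only for some absolute $c>1/2$, so the lemma's exponent $nt^2/(2\sigma^2)$ does not literally follow from Definition~\ref{def:subg} with that exact constant. The clean fix is the one you identify: take sub-Gaussianity in MGF form (as \citet{lattimore2018bandit} does), under which the stated constant is exact. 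Since every downstream use of Lemma~\ref{lemma:hoeff} in Theorems~\ref{thm:all} and~\ref{thm:bdd} absorbs constants into $C''$ and $C'''$, nothing in the paper breaks either way, but you were right to flag the passage rather than wave it through.
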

\subsection{Upper Confidence Bound and Thompson Sampling}\label{sec:ts}
The standard algorithms in stochastic bandit are Upper Confidence Bound (UCB1) \citep{auer2002using} and Thompson Sampling \citep{thompson1933likelihood}. The former algorithm is constructed to compare the largest plausible estimate of mean for each arm based on the optimism in the face of uncertainty so that it would be deterministic in contradistinction to the latter one. At time $t+1$, UCB1 chooses an action $A_{t+1}$ by maximizing upper confidence bounds, $UCB_i(t)= \hat{\mu}_i(t) + \sqrt{{2 \log T}/{T_i(t)}}$. Regarding the instance-dependent regret of UCB1, there exists some universal constant $C>0$ such that 
$\mathrm{R}(T) \le C\sum_{i: \Delta_i>0} (\Delta_i + { \log T}/{\Delta_i})$. 

Thompson Sampling is a Bayesian solution based on randomized probability matching approach \citep{scott2010modern}. Given the prior distribution $Q_0$, at time $t+1$, it computes posterior distribution $Q_t$ based on observed data, samples $\nu_t$ from posterior $Q_t$, and then chooses the arm $A_{t+1} = \arg \max_{i \in [k]} \mu_i(\nu_t)$. In Gaussian Thompson Sampling where the Gaussian rewards $\mathcal{N}(\mu_i, 1)$ and the Gaussian prior distribution for each $\mu_i$ with mean $\mu_0$ and infinite variance are considered, the policy from Thompson Sampling is to choose an index that maximizes $\theta_i(t)$ randomly sampled from Gaussian posterior distribution, $\mathcal{N}(\hat{\mu}_i(t), {1}/{T_i(t)})$ as stated in Alg.\ref{Alg1}-(\romannum{1}). Also, its regret bound is restated in Theorem \ref{tsproof}.
\begin{algorithm}
\KwSty{Initialize} $T_i(0)=0, \hat{\mu}_i(0)=0$ for all $i \in [K]$\\
 \For{$t=1$ \textbf{to} $T$}
 {For each arm $i \in [K]$,\\ 
 \quad \KwSty{(\romannum{1}) Gaussian Thompson Sampling : }  $\theta_i(t-1)\sim \mathcal{N}\big(\hat{\mu}_i(t-1), \frac{1}{1 \vee T_i(t-1)}\big)$\\
 \quad \KwSty{(\romannum{2}) FTPL via Unbounded Perturbation : } $\theta_i(t-1) = \hat{\mu}_i(t-1) +\frac{1}{\sqrt{1 \vee T_i(t-1)}} \cdot Z_{it}$ \\
 \quad  \quad where $Z_{it}$s are randomly sampled from unbounded $Z$.\\
  \quad\KwSty{(\romannum{3}) FTPL via Bounded Perturbation : } $\theta_i(t-1) = \hat{\mu}_i(t-1) + \sqrt{\frac{ (2+\epsilon) \log T}{1 \vee T_i(t-1)}} \cdot Z_{it}$ \\
  \quad  \quad where $Z_{it}$s are randomly sampled from $Z \in [-1,1]$.\\
Learner chooses $A_{t} = \arg\max_{i\in [K]} \theta_i(t-1)$ and receives the reward of $X_{t} \in [0,1]$.\\
Update : $\hat{\mu}_{A_{t}}(t) = \frac{\hat{\mu}_{A_{t}}(t-1) \cdot T_{A_{t}}(t-1) + X_{t}}{T_{A_{t}}(t-1) + 1},  T_{A_{t}}(t) = T_{A_{t}}(t-1) +1$.}
\caption{Randomized probability matching algorithms via Perturbation}
 \label{Alg1}
\end{algorithm}
\begin{theorem} [Theorem 3 \citep{agrawal2013further}] \label{tsproof}
Assume that reward distribution of each arm $i$ is Gaussian with mean $\mu_i$ and unit variance. Thompson sampling policy via Gaussian prior defined in Alg.\ref{Alg1}-(\romannum{1}) has the following instance-dependent and independent regret bounds, for $C'>0$,
\begin{equation*}
\mathrm{R}(T) \le C' \sum_{\Delta_i>0}\Big( {\log(T \Delta_i^2)}/{\Delta_i} + \Delta_i \Big), \quad \mathrm{R}(T) \le \mathcal{O}(\sqrt{KT \log K}). 
\end{equation*}
\end{theorem}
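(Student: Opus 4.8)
Since Theorem~\ref{tsproof} restates a known result, I sketch the standard Thompson-sampling regret argument specialized to the Gaussian posteriors of Alg.~\ref{Alg1}-(\romannum{1}). By the decomposition $\mathrm{R}(T)=\sum_i\Delta_i\mathrm{E}[T_i(T)]$ it suffices to bound $\mathrm{E}[T_i(T)]$ for a fixed suboptimal arm $i$. Fix thresholds $x_i=\mu_i+\Delta_i/3$ and $y_i=\mu_1-\Delta_i/3$, so that $\mu_i<x_i<y_i<\mu_1$ and $y_i-x_i=\Delta_i/3$, and write $\theta_i(t-1)$ for the posterior sample drawn for arm $i$ in round $t$. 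Then
\begin{align*}
\mathrm{E}[T_i(T)] &= \sum_{t=1}^T \mathrm{P}\big(A_t = i,\ \hat\mu_i(t-1) > x_i\big) + \sum_{t=1}^T \mathrm{P}\big(A_t = i,\ \hat\mu_i(t-1) \le x_i,\ \theta_i(t-1) > y_i\big) \\
&\quad {}+ \sum_{t=1}^T \mathrm{P}\big(A_t = i,\ \theta_i(t-1) \le y_i\big).
\end{align*}
The plan is to control the first two sums by concentration alone, and the third, which is the crux, by ``charging'' pulls of arm $i$ against pulls of the optimal arm $1$.

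For the first sum, the event $\{A_t=i,\ T_i(t-1)=s\}$ occurs at most once (at the $(s{+}1)$-th pull of arm $i$), so it is at most $\sum_{s\ge0}\mathrm{P}(\hat\mu_{i,s}>x_i)$, where $\hat\mu_{i,s}$ is the empirical mean of arm $i$ after $s$ pulls; Lemma~\ref{lemma:hoeff} (applicable since Gaussian rewards are sub-Gaussian with parameter $1$) bounds this by $\sum_{s\ge0}\exp(-s\Delta_i^2/18)=O(1/\Delta_i^2)$. For the second sum, on $\{\hat\mu_{i,s}\le x_i\}$ the sample $\theta_i(t-1)\sim\mathcal{N}(\hat\mu_{i,s},1/(s\vee1))$ exceeds $y_i$ with conditional probability at most $\mathrm{P}(\mathcal{N}(0,1)>\sqrt{s}\,\Delta_i/3)\le\exp(-s\Delta_i^2/18)$, and the same one-per-$s$ bookkeeping gives $O(1/\Delta_i^2)$.

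The third sum is the heart of the matter. Condition on $\mathcal{H}_{t-1}$: the samples $\{\theta_j(t-1)\}_j$ are independent, and since arm $1$'s posterior depends on $\mathcal{H}_{t-1}$ only through $\hat\mu_1(t-1)$ and $T_1(t-1)$, the quantity $p_{i,s}:=\mathrm{P}(\theta_1(t-1)>y_i\mid\mathcal{H}_{t-1})$ depends on $t$ only through $s=T_1(t-1)$. Conditioning further on $\{\theta_j(t-1):j\ne1\}$ and setting $M=\max_{j\ne1}\theta_j(t-1)$: the event $\{A_t=i,\ \theta_i(t-1)\le y_i\}$ forces $M=\theta_i(t-1)\le y_i$ and then has conditional probability $\mathrm{P}(\theta_1(t-1)<M)\le1-p_{i,s}$, whereas $\{A_t=1,\ \theta_i(t-1)\le y_i\}$ has conditional probability $\mathrm{I}(\theta_i(t-1)\le y_i)\,\mathrm{P}(\theta_1(t-1)>M)\ge\mathrm{I}(M\le y_i)\,p_{i,s}$, which yields the pointwise bound
\[
\mathrm{P}\big(A_t=i,\ \theta_i(t-1)\le y_i\mid\mathcal{H}_{t-1}\big)\ \le\ \frac{1-p_{i,s}}{p_{i,s}}\,\mathrm{P}\big(A_t=1,\ \theta_i(t-1)\le y_i\mid\mathcal{H}_{t-1}\big).
\]
Summing over $t$ and using that $(1-p_{i,s})/p_{i,s}$ changes only when arm $1$ is pulled, the third sum is at most $\sum_{s=0}^{T-1}\mathrm{E}[1/p_{i,s}-1]$. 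To bound this, split on whether $\hat\mu_1(s)\ge y_i$ (then $p_{i,s}\ge\tfrac12$, contributing $\le1$ per term) and on the complement use Gaussian anti-concentration, $\mathrm{P}(\mathcal{N}(0,1)>z)\ge c\,\frac{z}{1+z^2}e^{-z^2/2}$ with $z=\sqrt{s}(y_i-\hat\mu_1(s))$, against the tail bound $\mathrm{P}(\hat\mu_1(s)<y_i-u)\le\exp(-s(u+\Delta_i/3)^2/2)$, so that the growth of $1/p_{i,s}$ is dominated by the decay of the bad event; careful bookkeeping then gives $\sum_{s=0}^{T-1}\mathrm{E}[1/p_{i,s}-1]=O(\log(T\Delta_i^2)/\Delta_i^2+1)$.

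Adding the three estimates yields $\Delta_i\mathrm{E}[T_i(T)]=O(\log(T\Delta_i^2)/\Delta_i+\Delta_i)$, and summing over $i$ gives the instance-dependent bound. For the instance-independent bound, fix $\Delta>0$: arms with $\Delta_i\le\Delta$ contribute at most $\Delta\sum_i\mathrm{E}[T_i(T)]=\Delta T$, arms with $\Delta_i>\Delta$ contribute at most $\sum_{i:\Delta_i>\Delta}O(\log(T\Delta_i^2)/\Delta_i+\Delta_i)$, and optimizing $\Delta$ balances the two; obtaining $\log K$ rather than $\log T$ requires re-running the count argument with $\Delta$-free thresholds so the logarithmic factor tracks $\log K$, which I would take from \citep{agrawal2013further}. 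The main obstacle throughout is the third sum: establishing the pointwise charging inequality and, above all, controlling $\sum_s\mathrm{E}[1/p_{i,s}]$ via Gaussian anti-concentration, which is where essentially all the real work lies.
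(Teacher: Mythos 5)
Your proposal follows essentially the same route as the paper: although the paper cites this theorem from Agrawal and Goyal without reproving it, its proof of the generalization (Theorem \ref{thm:all}) uses exactly your three-part decomposition with thresholds $x_i=\mu_i+\Delta_i/3$ and $y_i=\mu_1-\Delta_i/3$, the same charging lemma (Lemma \ref{lemma:corelemma}) reducing the hard term to $\sum_{j}\mathrm{E}\big[(1-p_{i,\delta_j+1})/p_{i,\delta_j+1}\big]$, and the same anti-concentration control of that sum. The one loose spot is your remark that terms with $p_{i,s}\ge\tfrac12$ ``contribute $\le 1$ each'' --- summed naively over $s$ that gives $T$; what is actually needed, as you then indicate and as the paper's interval-splitting bound for part $(c)$ carries out, is that $1-p_{i,s}$ decays exponentially in $s$ on that event so the sum is $O(1/\Delta_i^2)$.
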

\paragraph{Viewpoint of Follow-The-Perturbed-Leader}
The more generic view of Thompson Sampling is via the idea of perturbation. We bring an interpretation of viewing this Gaussian Thompson Sampling as Follow-The-Perturbed-Leader (FTPL) algorithm via Gaussian perturbation \citep{lattimore2018bandit}. If Gaussian random variables $\theta_i(t)$ are decomposed into the average mean reward of each arm $\hat{\mu}_i(t)$ and scaled Gaussian perturbation $\eta_{it} \cdot Z_{it}$ where $\eta_{it} = 1/\sqrt{T_i(t)}, Z_{it} \sim N(0,1)$. In a round $t+1$, the FTPL algorithm chooses the action according to $A_{t+1} = \arg \max_{i \in [K]}  \hat{\mu}_i(t) + \eta_{it} \cdot Z_{it}$. 

\subsection{Follow-The-Perturbed-Leader} 
\label{sec:ftplview}
We show that the FTPL algorithm with Gaussian perturbation under Gaussian reward setting can be extended to sub-Gaussian rewards as well as families of sub-Weibull and bounded perturbations. The sub-Weibull family is an interesting family in that it includes well known families like sub-Gaussian and sub-Exponential as special cases. We propose perturbation based algorithms via sub-Weibull and bounded perturbation in Alg.\ref{Alg1}-(\romannum{2}), (\romannum{3}), and their regrets are analyzed in Theorem \ref{thm:all} and \ref{thm:bdd}.
\begin{definition}[sub-Weibull  \citep{wong19lasso}]\label{def:subw}
A random variable $Z$ with mean $\mu = \mathrm{E} [Z]$ is sub-Weibull ($p$) with $ \sigma>0$ if it satisfies $\mathrm{P}(|Z - \mu | \ge t) \le C_a \exp(-{t^p}/{(2\sigma^p)})$ for all $t \ge 0$.
\end{definition}
\begin{theorem}[FTPL via sub-Weibull Perturbation, Proof in Appendix \ref{app:main}] \label{thm:all}
Assume that reward distribution of each arm $i$ is 1-sub-Gaussian with mean $\mu_i$, and the sub-Weibull ($p$) perturbation $Z$ with parameter $\sigma$ and $\mathrm{E}[Z]=0$ satisfies the following anti-concentration inequality,
\begin{align}
\mathrm{P}(|Z| \ge t) & \ge \exp (- t^q/{2\sigma^q}) /{C_b}, \quad \text{for } t \ge 0
\end{align}
Then the Follow-The-Perturbed-Leader algorithm via $Z$ in Alg.\ref{Alg1}-(\romannum{2}) has the following instance-dependent and independent regret bounds, for $p\le q \le 2$ (if $q=2$, $\sigma \ge 1$) and $C'' = C(\sigma, p, q)>0$,
\begin{align}\label{eq:unbounded}
\mathrm{R}(T) \le C'' \sum_{\Delta_i>0}\Big( {\big[\log(T \Delta_i^2)\big]^{2/p}}/{\Delta_i} + \Delta_i \Big), \quad \mathrm{R}(T) \le \mathcal{O}(\sqrt{KT} (\log K)^{1/p}). 
\end{align}
\end{theorem}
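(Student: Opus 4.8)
The plan is to mirror the Thompson Sampling analysis of \citet{agrawal2013further}, using the FTPL view of Alg.~\ref{Alg1}-(\romannum{2}): the draw $\theta_i(t)=\hat\mu_i(t)+Z_{it}/\sqrt{1\vee T_i(t)}$ plays the role of the posterior sample, with the general perturbation $Z$ in place of the standard Gaussian. Fix a suboptimal arm $i$, put $x_i=\mu_i+\Delta_i/3$ and $y_i=\mu_1-\Delta_i/3$ (so $x_i<y_i$), and write $E^\mu_i(t)=\{\hat\mu_i(t)\le x_i\}$, $E^\theta_i(t)=\{\theta_i(t)\le y_i\}$. Then
\[
\mathrm{E}[T_i(T)]\ \le\ \underbrace{{\textstyle\sum_t}\,\mathrm{P}\big(A_{t+1}=i,\,\overline{E^\mu_i(t)}\big)}_{(\mathrm{a})}
\ +\ \underbrace{{\textstyle\sum_t}\,\mathrm{P}\big(A_{t+1}=i,\,E^\mu_i(t),\,\overline{E^\theta_i(t)}\big)}_{(\mathrm{b})}
\ +\ \underbrace{{\textstyle\sum_t}\,\mathrm{P}\big(A_{t+1}=i,\,E^\mu_i(t),\,E^\theta_i(t)\big)}_{(\mathrm{c})}.
\]
The dominant $[\log(T\Delta_i^2)]^{2/p}/\Delta_i$ contribution will come from an ``exploration budget'' $L_i\asymp C(\sigma,p)\,[\log(T\Delta_i^2)]^{2/p}/\Delta_i^2$, the number of pulls of arm $i$ after which both its empirical mean and its scaled perturbation have concentrated below $x_i$ and $y_i$ respectively.

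Terms $(\mathrm{a})$ and $(\mathrm{b})$ are controlled by \emph{upper} tails. For $(\mathrm{a})$, conditioning on $T_i(t)=n$ and applying the Hoeffding bound of Lemma~\ref{lemma:hoeff} (the rewards being $1$-sub-Gaussian) gives $\mathrm{P}(\hat\mu_i(t)>x_i)\le\exp(-n\Delta_i^2/18)$, so summing the geometric tail from $n=L_i$ yields $(\mathrm{a})\le L_i+\mathcal{O}(1/\Delta_i^2)$. For $(\mathrm{b})$, on $E^\mu_i(t)$ with $T_i(t)=n$ the event $\overline{E^\theta_i(t)}$ forces $Z_{it}/\sqrt n>y_i-x_i=\Delta_i/3$, which by the sub-Weibull($p$) bound (Definition~\ref{def:subw}) has probability at most $C_a\exp(-n^{p/2}\Delta_i^p/(2\cdot3^p\sigma^p))$; summing over $n\ge L_i$ (the series $\sum_n\exp(-cn^{p/2})$ being, once $L_i$ is large enough in terms of $\sigma,p$, within a constant factor of its first term) makes $(\mathrm{b})=\mathcal{O}(1)$. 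This is precisely where the exponent $2/p$ in $L_i$ and the dependence of $C''$ on $(\sigma,p)$ come from.

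Term $(\mathrm{c})$ is the crux, and is where the anti-concentration hypothesis on $Z$, the constraint $q\le2$, and the side condition ``$\sigma\ge1$ when $q=2$'' are used (the requirement $p\le q$ is just the consistency condition forced by having both an upper tail $\lesssim e^{-t^p/2\sigma^p}$ and a lower tail $\gtrsim e^{-t^q/2\sigma^q}$). On $E^\mu_i(t)\cap E^\theta_i(t)$ arm $i$ is selected only if arm~$1$'s perturbed value also drops below $y_i$; setting $p_{i,t}=\mathrm{P}(\theta_1(t)>y_i\mid\mathcal{H}_{t-1})$ and grouping rounds into blocks delimited by successive pulls of arm~$1$ (on each of which $T_1,\hat\mu_1$, hence $p_{i,t}$, are frozen at the value $p_i^{(n)}$ with $T_1=n$), the argument of \citet{agrawal2013further} gives $(\mathrm{c})\le\sum_{n\ge0}\mathrm{E}[1/p_i^{(n)}-1]$. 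I would then bound $\mathrm{E}[1/p_i^{(n)}]$ by splitting on $\hat\mu_1$: if $\hat\mu_1\ge y_i$ then $y_i-\hat\mu_1\le0$, so $p_i^{(n)}\ge\mathrm{P}(Z\ge0)$, a fixed positive constant since $Z$ is nondegenerate with mean $0$, while $1-p_i^{(n)}\le C_a\exp(-(\Delta_i/3)^p n^{p/2}/(2\sigma^p))$ decays geometrically in $n^{p/2}$; if instead $\hat\mu_1<y_i$, then arm~$1$'s mean is underestimated by at least $\Delta_i/3$, so writing $W=\mu_1-\hat\mu_1\ge\Delta_i/3$ (which has sub-Gaussian upper tail $\mathrm{P}(W\ge w)\le\exp(-nw^2/2)$ by Lemma~\ref{lemma:hoeff}), the lower-tail bound on $Z$ gives $1/p_i^{(n)}\le C\exp(W^q n^{q/2}/(2\sigma^q))$. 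The hard part is to show $\mathrm{E}\big[\exp(W^q n^{q/2}/(2\sigma^q))\,\mathrm{I}\{W\ge\Delta_i/3\}\big]$ is bounded by a constant depending only on $(\sigma,q)$ and sums over $n$ to $\mathcal{O}(1/\Delta_i^2)$: this is a one-dimensional Laplace-type estimate pitting the super-exponential factor $\exp(c\,n^{q/2}W^q)$ against the Gaussian decay $\exp(-nW^2/2)$ of the empirical-mean deviation, whose combined exponent stays $\mathcal{O}(1)$ uniformly in $w\ge0$ exactly when $q<2$, or when $q=2$ and $\sigma\ge1$ so that the quadratic term dominates. Carrying this through gives $(\mathrm{c})=\mathcal{O}(C(\sigma,q)/\Delta_i^2)$; adding the three terms, multiplying by $\Delta_i$, and summing over suboptimal arms produces the instance-dependent bound, with the additive $\Delta_i$ absorbing the $\mathcal{O}(1)$ and small-$n$ contributions.

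Finally, the instance-independent bound follows from the instance-dependent one by thresholding the gaps at $\Delta_0\asymp\sqrt{K(\log K)^{2/p}/T}$: arms with $\Delta_i\le\Delta_0$ contribute at most $\Delta_0 T$ in aggregate, and arms with $\Delta_i>\Delta_0$ contribute at most $C''K\,[\log(T\Delta_0^2)]^{2/p}/\Delta_0$ by monotonicity of $\Delta\mapsto[\log(T\Delta^2)]^{2/p}/\Delta$ once $T\Delta^2$ is large; since $\log(T\Delta_0^2)\asymp\log K$, both quantities are $\mathcal{O}(\sqrt{KT}\,(\log K)^{1/p})$, and the additive $\sum_{\Delta_i>0}\Delta_i\le K$ is of lower order.
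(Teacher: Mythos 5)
Your proposal follows essentially the same route as the paper's proof: the same thresholds $x_i,y_i$ and events $E_i^{\mu}(t),E_i^{\theta}(t)$, the same three-part decomposition of $\mathrm{E}[T_i(T)]$ with the exploration budget $L_i\asymp \sigma^2[\log(T\Delta_i^2)]^{2/p}/\Delta_i^2$, the reduction of part (c) to $\sum_{j}\mathrm{E}\big[(1-p_{i,\delta_j+1})/p_{i,\delta_j+1}\big]$ via Lemma 1 of Agrawal and Goyal, and the same Laplace-type estimate $\sup_{v\ge0}\{v^q/(2\sigma^q)-v^2/2\}<\infty$, which is exactly where $q\le 2$ (and $\sigma\ge1$ when $q=2$) enters. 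The one imprecision is in your case $\hat\mu_1\ge y_i$: the claimed decay $1-p_i^{(n)}\le C_a\exp(-n^{p/2}(\Delta_i/3)^p/(2\sigma^p))$ only holds when $\hat\mu_1$ exceeds $y_i$ by a margin, so the band $y_i\le\hat\mu_1<\mu_1-\Delta_i/6$ must be handled separately via the lower-deviation bound $\mathrm{P}(\hat\mu_{1,n}\le\mu_1-\Delta_i/6)\le\exp(-n\Delta_i^2/72)$ --- precisely the three-interval split $(\romannum{1}),(\romannum{2}),(\romannum{3})$ the paper uses --- but this is a routine fix that does not alter the argument.
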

Note that the parameters $p$ and $q$ can be chosen from any values $p\le q \le 2$, and the algorithm can achieve smaller regret bound as $p$ becomes larger. For nice distributions such as Gaussian and Double-exponential, the parameters $p$ and $q$ can be matched by 2 and 1, respectively.
\begin{corollary}[FTPL via Gaussian Perturbation] \label{thm:gaussian}
Assume that reward distribution of each arm $i$ is 1-sub-Gaussian with mean $\mu_i$. The Follow-The-Perturbed-Leader algorithm via Gaussian perturbation $Z$ with parameter $\sigma$ and $\mathrm{E}[Z]=0$ in Alg.\ref{Alg1}-(\romannum{2}) has the following instance-dependent and independent regret bounds, for $C'' = C(\sigma,2,2) >0$ and $\sigma \ge 1$,
\begin{equation}\label{eq:g}
\mathrm{R}(T) \le C'' \sum_{\Delta_i>0}\big( {\log(T \Delta_i^2)}/{\Delta_i} + \Delta_i \big), \quad \mathrm{R}(T) \le \mathcal{O}(\sqrt{KT \log K}). 
\end{equation}
\end{corollary}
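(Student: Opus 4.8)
The plan is to obtain Corollary~\ref{thm:gaussian} as a direct instantiation of Theorem~\ref{thm:all} with the choice $p=q=2$. Once the hypotheses of Theorem~\ref{thm:all} are verified for a mean-zero Gaussian perturbation $Z$ with parameter $\sigma\ge 1$, the claimed bounds follow immediately: substituting $p=2$ into \eqref{eq:unbounded} turns $[\log(T\Delta_i^2)]^{2/p}$ into $\log(T\Delta_i^2)$ and $(\log K)^{1/p}$ into $\sqrt{\log K}$, which is exactly \eqref{eq:g}, with $C''=C(\sigma,2,2)$. So the entire content of the corollary reduces to two facts about a Gaussian $Z$ with $\mathrm{E}[Z]=0$: that it is sub-Weibull$(2)$ with parameter $\sigma$ in the sense of Definition~\ref{def:subw}, and that it satisfies the anti-concentration inequality assumed in Theorem~\ref{thm:all} with $q=2$ and the same $\sigma$. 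The side constraint $\sigma\ge 1$ that the corollary carries is precisely the one Theorem~\ref{thm:all} imposes when $q=2$.

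First I would verify the upper (sub-Weibull) tail. For $Z\sim\mathcal{N}(0,\sigma^2)$, the classical Gaussian tail bound gives $\mathrm{P}(|Z|\ge t)\le 2\exp(-t^2/(2\sigma^2))$ for all $t\ge 0$, so Definition~\ref{def:subw} holds with $p=2$ and $C_a=2$. If one prefers a standard-normal perturbation, note that a random variable is automatically sub-Weibull$(2)$ with parameter $\sigma$ for any $\sigma$ no smaller than its standard deviation, so the restriction $\sigma\ge 1$ is harmless; it is also exactly the regime required inside the proof of Theorem~\ref{thm:all}, where the perturbation scale $\sigma/\sqrt{T_i(t)}$ must dominate the fluctuation of the $1$-sub-Gaussian empirical means $\hat\mu_i(t)$. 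This step is routine.

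The one delicate point — and the place where I expect the real work to lie — is the anti-concentration inequality $\mathrm{P}(|Z|\ge t)\ge \exp(-t^2/(2\sigma^2))/C_b$. I would invoke the standard Mills-ratio lower bound: for $W\sim\mathcal{N}(0,1)$ and $s>0$, $\mathrm{P}(W\ge s)\ge \frac{1}{\sqrt{2\pi}}\cdot\frac{s}{s^2+1}\,e^{-s^2/2}$, hence $\mathrm{P}(|Z|\ge t)=2\,\mathrm{P}(W\ge t/\sigma)\ge \sqrt{\tfrac{2}{\pi}}\cdot\frac{t/\sigma}{(t/\sigma)^2+1}\,e^{-t^2/(2\sigma^2)}$. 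The obstacle is the rational prefactor $\frac{t/\sigma}{(t/\sigma)^2+1}$: it is not bounded below by an absolute constant on all of $[0,\infty)$ (it vanishes both as $t\to 0$ and as $t\to\infty$), whereas a literal reading of the hypothesis wants a flat constant $1/C_b$. I would dispose of the small-$t$ end by the crude bound $\mathrm{P}(|Z|\ge t)\ge \mathrm{P}(|Z|\ge\sigma)=2\,\mathrm{P}(W\ge 1)$, a fixed positive constant, together with $\exp(-t^2/(2\sigma^2))\le 1$ for $t\le\sigma$. For larger $t$ the key observation is that in the proof of Theorem~\ref{thm:all} the anti-concentration is only ever applied to perturbation magnitudes of moderate size — of order $\sqrt{\log T}$, the scale at which the perturbed estimate of the optimal arm has to overcome a gap $\Delta_i$ after the $\Theta(\log T/\Delta_i^2)$ pulls that determine the dominant regret term — over which the Mills-ratio prefactor costs only a benign polylogarithmic factor that is swallowed by the lower-order terms. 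Reconciling the polynomial correction with the exponent-$2$ form of the assumption is the only subtlety; granting it, the hypotheses of Theorem~\ref{thm:all} are met with $p=q=2$ and $\sigma\ge 1$, and Corollary~\ref{thm:gaussian} follows.
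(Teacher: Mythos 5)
Your route is the paper's route: the paper gives no separate proof of Corollary~\ref{thm:gaussian}, treating it as the $p=q=2$ instantiation of Theorem~\ref{thm:all}, which is exactly what you do. You deserve credit for flagging the one real subtlety that the paper silently glosses over, namely that a Gaussian does \emph{not} literally satisfy $\mathrm{P}(|Z|\ge t)\ge \exp(-t^2/(2\sigma^2))/C_b$ with the same $\sigma$ as its scale, because of the Mills-ratio prefactor $t/(t^2+1)$. However, your proposed resolution misdescribes how the hypothesis is actually used: in the proof of Theorem~\ref{thm:all} the anti-concentration bound is not applied only at magnitudes of order $\sqrt{\log T}$; it is applied pointwise, as $1/\mathrm{P}(Z>u)\le C_b\exp(u^q/(2\sigma^q))$, inside an integral over all $u\in(0,\infty)$ in the analysis of term~$(\romannum{1})$. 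The honest fix is therefore not to localize the scale of $u$ but to observe that replacing $G(u)=C_b\exp(u^2/(2\sigma^2))$ by $C_b(1+u)\exp(u^2/(2\sigma^2))$ (which a Gaussian does satisfy) only changes $G'(v)$ by a polynomial factor, and the integral $\int_0^\infty \exp\bigl(-(v+\sqrt{j}\Delta_i/3)^2/2\bigr)G'(v)\,dv$ remains finite and $O(\exp(-j\Delta_i^2/18))$ whenever $\sigma\ge 1$, so only the constant $M_{q,\sigma}$ changes. With that repair (or, equivalently, by running the anti-concentration with a slightly smaller exponent parameter and tracking constants), your instantiation is complete and matches the paper's intent.
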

\paragraph{Failure of Bounded Perturbation} Any perturbation with bounded support cannot yield an optimal FTPL algorithm. For example, in a two-armed bandit setting with $\mu_1 = 1$ and $\mu_2 = 0$, rewards of each arm $i$ are generated from Gaussian distribution with mean $\mu_i$ and unit variance and perturbation is uniform with support $[-1, 1]$. In the case where we have $T_1(10) =1, T_2(10) = 9$ during first 10 times, and average mean rewards are $\hat{\mu}_1 = -1$ and $\hat{\mu}_2 = 1/3$, then perturbed rewards are sampled from $\theta_1 \sim \mathcal{U}[-2,0]$ and $\theta_2 \sim \mathcal{U}[0,2/3]$. This algorithm will not choose the first arm and accordingly achieve a linear regret. To overcome this limitation of bounded support, we suggest another FTPL algorithm via bounded perturbation by adding an extra logarithmic term in $T$ as stated in Alg.\ref{Alg1}-(\romannum{3}).
\begin{theorem}[FTPL algorithm via Bounded support Perturbation, Proof in Appendix \ref{proofbdd}] \label{thm:bdd}
Assume that reward distribution of each arm $i$ is 1-sub-Gaussian with mean $\mu_i$, the perturbation distribution $Z$ with $\mathrm{E}[Z]=0$ lies in $[-1,1]$ and for any $\epsilon >0$, there exists $0 < M_{Z,\epsilon} < \infty$ s.t. ${\mathrm{P}\big(Z \le \sqrt{{2}/{(2+\epsilon)}}\big)}/{\mathrm{P}\big(Z \ge \sqrt{{2}/{(2+\epsilon)}}\big)} = M_{Z,\epsilon}$. 
Then the Follow-The-Perturbed-Leader algorithm via $Z$ in Alg.\ref{Alg1}-(\romannum{3}) has the following instance-dependent and independent regret bounds, for $C''' >0$ independent of $T, K$ and $\Delta_i$,
\begin{align}\label{eq:all}
\mathrm{R}(T) \le C''' \sum_{\Delta_i>0}\Big( {\log(T)}/{\Delta_i} + \Delta_i \Big), \quad \mathrm{R}(T) \le \mathcal{O}(\sqrt{KT\log T}). 
\end{align}
\end{theorem}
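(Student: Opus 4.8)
The plan is to prove the instance-dependent bound by controlling $\mathrm{E}[T_i(T)]$ for each suboptimal arm $i$ and then to deduce the worst-case bound from it. I would follow the Thompson-sampling analysis of \citet{agrawal2013further}, reworked to accommodate (i) a \emph{bounded} perturbation in place of a Gaussian one and (ii) merely sub-Gaussian rewards. Fix thresholds $x_i=\mu_i+\Delta_i/3<y_i=\mu_1-\Delta_i/3$ and abbreviate the perturbed index as $\theta_i(t)=\hat\mu_i(t)+s_i(t)Z_{i,t+1}$ with $s_i(t)=\sqrt{(2+\epsilon)\log T/(1\vee T_i(t))}$. Splitting $\mathrm{P}(A_{t+1}=i)$ according to whether $\hat\mu_i(t)\le x_i$ and whether $\theta_i(t)\le y_i$ gives three sums. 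The sum over rounds with $\hat\mu_i(t)>x_i$ is handled by indexing by the pull count $T_i(t)$ and applying the sub-Gaussian Hoeffding inequality of Lemma~\ref{lemma:hoeff}, giving $O(1/\Delta_i^2)$. The sum over rounds with $\hat\mu_i(t)\le x_i$ but $\theta_i(t)>y_i$ uses that a bounded perturbation inflates $\theta_i(t)$ by at most $s_i(t)$, so this event forces $1\vee T_i(t)<9(2+\epsilon)\log T/\Delta_i^2$; since each value of $T_i(t)$ occurs at most once on $\{A_{t+1}=i\}$, this sum is $O(\log T/\Delta_i^2)$ deterministically.

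The remaining sum counts rounds where arm $i$ is pulled despite $\theta_i(t)\le y_i$; then $\theta_1(t)\le\theta_i(t)\le y_i$, i.e.\ arm $1$ is ``under-perturbed''. Grouping rounds into blocks between successive pulls of arm $1$ and running the renewal argument of \citet{agrawal2013further}, the expected number of such rounds inside the block following the $j$-th pull of arm $1$ is at most $(1-p_{i,j})/p_{i,j}$, where $p_{i,j}$ is the probability, conditional on the history at that point, that a freshly drawn $\theta_1$ exceeds $y_i$ (this requires $p_{i,j}>0$, which is why the blocks are split below). Thus everything reduces to bounding $\sum_{j\ge0}\mathrm{E}[(1-p_{i,j})/p_{i,j}]$.

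Call the block after the $j$-th pull of arm $1$ \emph{good} if $\hat\mu_1$ is then at least $\mu_1-\Delta_i/6-\sqrt{2\log T/(1\vee j)}$. The scale is tuned so that $\sqrt{2/(2+\epsilon)}\,s_1(\cdot)=\sqrt{2\log T/(1\vee T_1(\cdot))}$ equals the UCB half-width; consequently, on a good block $\{\theta_1>y_i\}\supseteq\{Z\ge\sqrt{2/(2+\epsilon)}\}$, so $p_{i,j}\ge q:=\mathrm{P}(Z\ge\sqrt{2/(2+\epsilon)})$, and the hypothesis that $\mathrm{P}(Z\le\sqrt{2/(2+\epsilon)})/\mathrm{P}(Z\ge\sqrt{2/(2+\epsilon)})$ is finite is precisely what forces $q>0$. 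Moreover, once $j$ exceeds $j^\star=O(\log T/\Delta_i^2)$ the threshold that $Z$ must exceed drops below $-1$, so $p_{i,j}=1$ and the term vanishes; hence the good blocks contribute at most $(j^\star+1)(1/q-1)=O(\log T/\Delta_i^2)$. On bad blocks I would bound $(1-p_{i,j})/p_{i,j}$ crudely by the block length and hence the total bad contribution by $T\cdot\mathrm{P}(\exists j\le T:\hat\mu_{1,j}<\mu_1-\Delta_i/6-\sqrt{2\log T/j})$; the $\Delta_i/6$ slack makes the $j$-th summand at most $T^{-1}\exp(-j\Delta_i^2/72)$, so the geometric sum gives probability $O(1/(T\Delta_i^2))$ and a contribution of $O(1/\Delta_i^2)$. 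Combining, $\mathrm{E}[T_i(T)]=O(\log T/\Delta_i^2)$, which yields $\mathrm{R}(T)\le C'''\sum_{\Delta_i>0}(\log T/\Delta_i+\Delta_i)$; the worst-case bound follows by the standard argument of splitting arms at gap $\sqrt{K\log T/T}$, bounding the small-gap arms via $\sum_i\mathrm{E}[T_i(T)]=T$ and the large-gap arms via the instance-dependent bound.

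I expect the genuine obstacle to be the control of $\sum_j\mathrm{E}[(1-p_{i,j})/p_{i,j}]$: unlike the Gaussian case, a bounded perturbation makes $p_{i,j}$ exactly $0$ with positive probability (whenever $\hat\mu_1$ has drifted far enough below $\mu_1$), so one cannot simply take the expectation of $1/p_{i,j}$ on the whole space; the ``bad block'' event, which lives on the random pull-count time scale of arm $1$, must be isolated and shown to have probability $O(1/(T\Delta_i^2))$ rather than $O(1)$ --- and the constant multiplying $\log T$ in the scale must be kept strictly above $2$ throughout so that the identity $\sqrt{2/(2+\epsilon)}\cdot\sqrt{(2+\epsilon)}=\sqrt2$ keeps the perturbation aligned with the $\sqrt{2\log T}$ confidence half-width.
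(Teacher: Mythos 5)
Your proposal is correct and follows essentially the same route as the paper's proof: the same thresholds $x_i,y_i$, the same use of boundedness to kill the over-perturbation term once $T_i(t)\gtrsim\log T/\Delta_i^2$, the same reduction via the renewal lemma to $\sum_j\mathrm{E}[(1-p_{i,j})/p_{i,j}]$, and the same ``good block'' event $\hat\mu_{1,j}\ge\mu_1-\Delta_i/6-\sqrt{2\log T/j}$ on which $p_{i,j}\ge\mathrm{P}(Z\ge\sqrt{2/(2+\epsilon)})>0$ by the stated hypothesis, with the bad-block probability summing to $O(1/\Delta_i^2)$ exactly as in the paper's part $(c)$. Your identification of the genuine obstacle (that $p_{i,j}$ can be exactly zero for bounded perturbations, forcing the good/bad split) matches the paper's reason for introducing the event $E^{\mu}_{1,i}(t)$.
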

\paragraph{Randomized Confidence Bound algorithm} Theorem \ref{thm:bdd} implies that the optimism embedded in UCB can be replaced by simple randomization. 
Instead of comparing upper confidence bounds, our modification is to compare a value randomly chosen from confidence interval or between lower and upper confidence bounds by introducing uniform $\mathcal{U}[-1, 1]$ or Rademacher perturbation $\mathcal{R}\{ -1 , 1\}$ in UCB1 algorithm with slightly wider confidence interval, $A_{t+1} = \arg \max_{i \in [K]} \; \hat{\mu}_i (t) +  \sqrt{{(2+\epsilon) \log T}/{T_i(t)}} \cdot Z_{it}$. These FTPL algorithms via Uniform and Rademacher perturbations can be regarded as a randomized version of UCB algorithm, which we call the RCB (Randomized Confidence Bound) algorithm, and they also achieve the same regret bound as that of UCB1. The RCB algorithm is meaningful in that it can be arrived at from two different perspectives, either as a randomized variant of UCB or by replacing the Gaussian distribution with Uniform in Gaussian Thompson Sampling.

The regret lower bound of the FTPL algorithm in Alg.\ref{Alg1}-(\romannum{2}) is built on the work of \citet{agrawal2013further}. Theorem \ref{thm:lower} states that the regret lower bound depends on the lower bound of the tail probability of perturbation. As special cases, 
FTPL algorithms via Gaussian ($q=2$) and Double-exponential ($q=1$) make the lower and upper regret bounds matched, $\Theta(\sqrt{KT} (\log K)^{1/q})$.
\begin{theorem}[Regret lower bound, Proof in Appendix \ref{app:lower}]\label{thm:lower}
If the perturbation $Z$ with $\mathrm{E}[Z]=0$ has the lower bound of tail probability as $\mathrm{P}(|Z| \ge t ) \ge \exp [ - t^q/(2\sigma^q)]/C_b$ for $t\ge0, \sigma>0$, the Follow-The-Perturbed-Leader algorithm via $Z$ has the lower bound of expected regret, $\Omega (\sqrt{KT} (\log K)^{1/q}$).
\end{theorem}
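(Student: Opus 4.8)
The plan is to transplant the $\Omega(\sqrt{KT\log K})$ lower-bound construction of \citet{agrawal2013further} for Gaussian Thompson sampling into the general FTPL setting, replacing their Gaussian anti-concentration estimate by the hypothesised tail bound $\mathrm{P}(|Z|\ge t)\ge \exp(-t^q/(2\sigma^q))/C_b$ and tracking how the exponent $q$ propagates. Fix $K$ arms with Gaussian (hence $1$-sub-Gaussian) rewards in which arm $1$ has mean $\Delta$ and arms $2,\dots,K$ have mean $0$, and set $\Delta = c\,\sqrt{K/T}\,(\log K)^{1/q}$ for a constant $c=c(\sigma,C_b,q)$ fixed below. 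Since $\mathrm{R}(T)=\sum_{i\ge 2}\Delta\,\mathrm{E}[T_i(T)]$, it suffices to show $\sum_{i\ge 2}\mathrm{E}[T_i(T)] = \Omega\!\big(K\,(\log K)^{2/q}/\Delta^2\big)$, because then $\mathrm{R}(T)=\Omega\!\big(K(\log K)^{2/q}/\Delta\big)=\Omega\!\big(\sqrt{KT}\,(\log K)^{1/q}\big)$.

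Introduce the ``confusion horizon'' $n^\star=\big\lfloor \beta\,(\log K)^{2/q}/\Delta^2\big\rfloor$ with $\beta$ an absolute constant; note $Kn^\star = \Theta(\beta T/c^2)$, so choosing $c\ge\sqrt\beta$ keeps $Kn^\star\le T$. The probabilistic heart of the argument is the claim: conditioned on any history $\mathcal H_{t-1}$ under which some suboptimal arm $j$ has $T_j(t-1)<n^\star$ — so that $\eta_{j,t-1}=1/\sqrt{T_j(t-1)}\ge \Delta/\big(\sqrt\beta\,(\log K)^{1/q}\big)$ — the freshly drawn perturbations make the round-$t$ leader suboptimal with probability bounded below by an absolute constant. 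To see this I would (i) lower bound $\max_{i\ne 1:\,T_i(t-1)<n^\star}\eta_{i,t-1}Z_{it}$: since the $Z_{it}$ are i.i.d.\ across arms and, after passing to the upper tail (which for symmetric $Z$, hence all perturbations of interest, carries at least half the mass, at the cost of a factor $2$ in $C_b$), $\mathrm{P}(Z\ge s)\ge \tfrac12\exp(-s^q/(2\sigma^q))/C_b$, taking $s=\Theta\big(\sigma(\log K)^{1/q}\big)$ gives per-arm probability $\Omega(1/K)$ and hence $\mathrm{P}\big(\exists\, i\ne 1:\ Z_{it}\ge s\big)\ge 1-(1-\Omega(1/K))^{K-1}=\Omega(1)$, on which that arm's perturbed index exceeds $\hat\mu_i(t-1)+\eta\, s \ge -O(\Delta)+\Delta$ once the constants hidden in $s,\beta$ are chosen suitably; and (ii) upper bound $\hat\mu_1(t-1)+\eta_{1,t-1}Z_{1t}\le \Delta+O(\Delta)$ using Lemma~\ref{lemma:hoeff} together with a union bound over the $\le T$ rounds, which is where the scale $c$ of $\Delta$ must be calibrated against those absolute constants. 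Combining (i) and (ii) yields the claim.

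Finally I would aggregate over rounds exactly as in \citet{agrawal2013further}: a stopping-time / potential argument shows that, with constant probability, the process cannot leave the ``some suboptimal arm below $n^\star$'' regime before roughly $Kn^\star=\Theta(T)$ suboptimal pulls have accumulated, which is feasible precisely because $Kn^\star\le T$; taking expectations gives $\sum_{i\ge 2}\mathrm{E}[T_i(T)]=\Omega(Kn^\star)$ and hence the stated bound. The main obstacle is this aggregation step: the per-round claim holds only conditionally on the history, so one must build a martingale (or couple the algorithm with an auxiliary process) keeping the ``large suboptimal perturbation'' events frequent while the optimal arm's index stays below $\Delta+O(\Delta)$ on all relevant rounds — the technically delicate part of the Agrawal--Goyal argument — and then verify that every constant that enters ($\beta$, $c$, and those absorbed into $s$) can be chosen consistently and independently of $K$ and $T$, with the only structural change from the Gaussian case being that the weaker $q$-tail turns the $\log K$ exponent from $1/2$ into $1/q$.
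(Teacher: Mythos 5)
Your construction and your key anti-concentration computation are essentially the paper's: one arm at $\Delta=\Theta(\sqrt{K/T}\,(\log K)^{1/q})$, the rest at zero, a per-arm probability $\Omega(1/K)$ that a perturbation at scale $\Theta(\sigma(\log K)^{1/q})$ fires, and hence a constant probability over the $K-1$ suboptimal arms that one of them overtakes the leader. But two steps of your plan have genuine problems, both of which the paper avoids by a choice you did not make: it takes the rewards to be \emph{point masses}, so that $\hat{\mu}_1(t)=\Delta$ and $\hat{\mu}_i(t)=0$ exactly and all randomness sits in the perturbations.

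First, with Gaussian rewards your step (i) asserts $\hat{\mu}_i(t-1)\ge -O(\Delta)$ for the under-sampled arm; this is false. An arm pulled $m$ times has $\hat{\mu}_i\approx -\Theta(1/\sqrt m)$ with constant probability, which for small $m$ dwarfs $\Delta=o(1)$, so the perturbation must clear $\Theta(1)+\sqrt m\,\Delta$ rather than $\sqrt m\,\Delta$ alone; this happens to be absorbable because the perturbation scale is also $1/\sqrt m$, but it degrades the exponent by a constant factor and your $\Omega(1/K)$ calculation as written does not survive it. Symmetrically, step (ii) wants $\hat{\mu}_1(t-1)\le \Delta+O(\Delta)$ uniformly over $t\le T$ via Lemma~\ref{lemma:hoeff} plus a union bound; when $T_1(t-1)$ is small the deviation is $\Theta(\sqrt{\log T/T_1})\gg\Delta$, and no calibration of $c$ repairs that. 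Second, the aggregation you defer to a ``stopping-time / potential argument'' is exactly where the paper is simplest: it defines $E_{t-1}=\{\sum_{i\neq 1}T_i(t)\le c\sqrt{KT}(\log K)^{1/q}/\Delta\}$ (note this threshold equals $cT$ for the chosen $\Delta$) and argues by dichotomy --- either $\mathrm{P}(E_{t-1})\le 1/2$ for some $t\le T$, in which case the already-accumulated suboptimal pulls force regret $\Omega(T\Delta)$, or $\mathrm{P}(E_{t-1})\ge 1/2$ for all $t$, in which case each round pulls a suboptimal arm with probability at least a constant $p^{\star}$ and $\mathrm{E}[\sum_{i\neq 1}T_i(T)]=\Omega(T)$. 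No martingale or coupling is needed. If you switch to deterministic rewards and replace your aggregation by this dichotomy, your argument collapses to the paper's proof.
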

\section{Adversarial Bandits}
In this section we study two major families of online learning, Follow The Regularized Leader (FTRL) and Follow The Perturbed Leader (FTPL), as ways of smoothings and introduce the Gradient Based Prediction Algorithm (GBPA) family for solving the adversarial multi-armed bandit problem. Then, we mention an important open problem regarding existence of an optimal FTPL algorithm. The main contributions of this section are theoretical results showing that two natural approaches to solving the open problem are not going to work. We also make some conjectures on what alternative ideas might work.
\subsection{FTRL and FTPL as Two Types of Smoothings and An Open Problem}
Following previous work \citep{abernethy2015fighting}, we consider a general algorithmic framework, Alg.\ref{Alg2}. 
There are two main ingredients of GBPA. The first ingredient is the smoothed potential $\tilde{\Phi}$ whose gradient is used to map the current estimate of the cumulative reward vector to a probability distribution $\mathbf{p}_t$ over arms.
The second ingredient is the construction of an unbiased estimate $\hat{\mathbf{g}}_t$ of the rewards vector using the reward of the pulled arm only by inverse probability weighting. This step reduces the bandit setting to full-information setting so that any algorithm for the full-information setting can be immediately applied to the bandit setting. 
\begin{algorithm}
{\bf GBPA($\tilde{\Phi}$)}: $\tilde{\Phi}$ differentiable convex function s.t. $\nabla \tilde{\Phi} \in \Delta_{K-1}$ and $\nabla_i \tilde{\Phi} >0, \forall i 
$. Initialize $\hat{\mathbf{G}}_0 = \mathbf{0}$.\\
 \For{$t=1$ \textbf{to} $T$}{A reward vector $\mathbf{g}_t \in [0,1]^K$ is chosen by environment.\\
Learner chooses $A_t$ randomly sampled from the distribution $\mathbf{p}_t = \nabla\tilde{\Phi}(\hat{\mathbf{G}}_{t-1})$.\\
Learner receives the reward of chosen arm $g_{t,A_t}$, and estimates reward vector $\hat{\mathbf{g}}_t = \frac{g_{t,A_t}}{p_{t,A_t}}{\bf e}_{A_t}$.\\
Update : $\hat{\mathbf{G}}_t = \hat{\mathbf{G}}_{t-1} + \hat{\mathbf{g}}_t$.
}
\caption{Gradient-Based Prediction Algorithm in Bandit setting}
\label{Alg2}
\end{algorithm}

If we did not use any smoothing and directly used the baseline potential $\Phi(\mathbf{G})= \max_{w \in \Delta_{K-1}} \langle w, \mathbf{\mathbf{G}} \rangle$, we would be running Follow The Leader (FTL) as our full information algorithm. It is well known that FTL does not have good regret guarantees \citep{hazan2016introduction}.
Therefore, we need to smooth the baseline potential to induce stability in the algorithm. It turns out that two major algorithm families in online learning, namely Follow The Regularized Leader (FTRL) and Follow The Perturbed Leader (FTPL) correspond to two different types of smoothings.

The smoothing used by FTRL is achieved by adding a strongly convex regularizer in the dual representation of the baseline potential. That is, we set $\tilde{\Phi}(\mathbf{G}) = \mathcal{R}^{\star}(\mathbf{G}) = \max_{w \in \Delta_{K-1}} \langle w, \mathbf{G} \rangle - \eta \mathcal{R}(w)$,
where $\mathcal{R}$ is a strongly convex function. The well known exponential weights algorithm \citep{freund1997decision} uses the Shannon entropy regularizer, $\mathcal{R}_S(w) = \sum_{i=1}^K w_i \log (w_i)$. GBPA with the resulting smoothed potential becomes the EXP3 algorithm \citep{auer2002nonstochastic} which achieves a near-optimal regret bound $\mathcal{O}(\sqrt{KT \log K})$ just logarithmically worse compared to the lower bound $\Omega(\sqrt{KT})$. This lower bound was matched by Implicit Normalized Forecaster with polynomial function (Poly-INF algorithm) \citep{audibert2009minimax, audibert2010regret} and later work \citep{abernethy2015fighting} showed that Poly-INF algorithm is equivalent to FTRL algorithm via the Tsallis entropy regularizer, $\mathcal{R}_{T,\alpha}(w) = \frac{1 - \sum_{i=1}^K w_i^{\alpha}}{1-\alpha}$ for  $\alpha \in (0,1)$.

An alternate way of smoothing is {\it stochastic smoothing} which is what is used by FTPL algorithms. It injects stochastic perturbations to the cumulative rewards of each arm and then finds the best arm. Given a perturbation distribution $\mathcal{D}$ and $\mathbf{Z} = (Z_1, \cdots, Z_K)$ consisting of i.i.d.\ draws from $\mathcal{D}$, the resulting stochastically smoothed potential is
$\tilde{\Phi}(\mathbf{G}; \mathcal{D}) = \mathrm{E}_{Z_1,\cdots,Z_K\sim \mathcal{D}}\; [\text{max}_{w \in \Delta_{K-1}}  \langle w, \mathbf{G} + \eta \mathbf{Z} \rangle]$. Its gradient is 
$\mathbf{p}_t = \nabla \tilde{\Phi}(\mathbf{G}_t; \mathcal{D}) = \mathrm{E}_{Z_1,\cdots,Z_K \sim \mathcal{D}} [e_{i^{\star}}] \in \Delta_{K-1}$ where $i^{\star} =\arg\max_i G_{t,i} + \eta Z_i$.

In Section~\ref{sec:evt}, we recall the general regret bound proved by \citet{abernethy2015fighting} for distributions with bounded hazard rate. They showed that a variety of natural perturbation distributions can yield a near-optimal regret bound of $\mathcal{O}(\sqrt{KT \log K})$. However, none of the distributions they tried yielded the minimax optimal rate $\mathcal{O}(\sqrt{KT})$.
Since FTRL with Tsallis entropy regularizer can achieve the minimax optimal rate in adversarial bandits, the following is an important unresolved question regarding the power of perturbations.
\paragraph{Open Problem} {\it Is there a perturbation $\mathcal{D}$ such that GBPA with a stochastically smoothed potential using $\mathcal{D}$ achieves the optimal regret bound $\mathcal{O}(\sqrt{KT})$ in adversarial $K$-armed bandits?}

Given what we currently know, there are two very natural approaches to resolving the open question in the affirmative.
{\bf Approach 1:} Find a perturbation so that we get the exactly same choice probability function as the one used by FTRL via Tsallis entropy.
{\bf Approach 2:} Provide a tighter control on expected block maxima of random variables considered as perturbations by~\citet{abernethy2015fighting}.

\subsection{Barrier Against First Approach: Discrete Choice Theory}\label{sec:ftrlftpl}
The first approach is motivated by a folklore observation in online learning theory, namely, that the exponential weights algorithm \citep{freund1997decision} can be viewed as FTRL via Shannon entropy regularizer or as FTPL via a Gumbel-distributed perturbation. Thus, we might hope to find a perturbation which is an exact equivalent of the Tsallis entropy regularizer. Since FTRL via Tsallis entropy is optimal for adversarial bandits, finding such a perturbation would immediately settle the open problem.

The relation between regularizers and perturbations has been theoretically studied in discrete choice theory \citep{mcfadden1981econometric, hofbauer2002global}. For any perturbation, there is always a regularizer which gives the same choice probability function. 
The converse, however, does not hold. The Williams-Daly-Zachary Theorem provides a characterization of choice probability functions that can be derived via additive perturbations.
\begin{theorem} [Williams-Daly-Zachary Theorem \citep{mcfadden1981econometric}]\label{thm:wdz}
Let $\mathbf{C} : \mathbb{R}^K \rightarrow \mathcal{S}_K$ be the choice probability function and derivative matrix $\mathcal{D}\mathbf{C}(\mathbf{G})= \bigg(\frac{ \partial \mathbf{C}^{\intercal}}{\partial {G}_1},\frac{ \partial \mathbf{C}^{\intercal}}{\partial {G}_2}, \cdots, \frac{ \partial \mathbf{C}^{\intercal}}{\partial {G}_K} \bigg)^{\intercal}$. The following 4 conditions are necessary and sufficient for the existence of perturbations $Z_i$ such that this choice probability function $\mathbf{C}$ can be written in $C_{i} ( \mathbf{G} ) = \mathrm{P}( \arg \max_{j \in [K]} G_j + \eta Z_j = i) \; \text{for}\; i \in [K]$.\\
(1) $\mathcal{D}\mathbf{C}(\mathbf{G})$ is symmetric, (2) $\mathcal{D}\mathbf{C}(\mathbf{G})$ is positive definite, (3) $\mathcal{D}\mathbf{C}(\mathbf{G}) \cdot {\bf 1} = 0$, and (4) All mixed partial derivatives of $\mathbf{C}$ are positive, $(-1)^j \frac{\partial^j C_{i_0}}{\partial G_{i_1}\cdots \partial G_{i_j}} >0$ for each $j=1,...,K-1$.
\end{theorem}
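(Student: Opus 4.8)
The plan is to route both implications through the \emph{smoothed potential} (the ``social surplus'' function) $\tilde{\Phi}(\mathbf{G}) = \mathrm{E}_{\mathbf{Z}}[\max_{j\in[K]} G_j + \eta Z_j]$ — the very object whose gradient is the GBPA distribution $\mathbf{p}_t$ — using the envelope identity $\nabla\tilde{\Phi}(\mathbf{G}) = \mathrm{E}[\mathbf{e}_{i^{\star}}] = \mathbf{C}(\mathbf{G})$, which holds whenever the law of $\mathbf{Z}$ assigns no mass to ties (so $i^{\star}=\arg\max_j G_j+\eta Z_j$ is a.s.\ unique), a condition subsumed by requiring $\mathbf{C}$ to be differentiable.

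For the ``only if'' direction I would argue as follows. Given that $\mathbf{C}$ arises from a perturbation, $\mathbf{C}=\nabla\tilde{\Phi}$ with $\tilde{\Phi}$ convex (a pointwise expectation of maxima of affine functions), so $\mathcal{D}\mathbf{C}=\nabla^2\tilde{\Phi}$ is symmetric and positive semidefinite — this is (1) and (2). Translation equivariance $\tilde{\Phi}(\mathbf{G}+c\mathbf{1})=\tilde{\Phi}(\mathbf{G})+c$ gives $\langle\nabla\tilde{\Phi},\mathbf{1}\rangle\equiv 1$ and, differentiated again, $\nabla^2\tilde{\Phi}\cdot\mathbf{1}=0$ — this is (3). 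For (4) I would write $C_{i_0}(\mathbf{G})=F_{i_0}((G_{i_0}-G_j)/\eta : j\ne i_0)$, where $F_{i_0}$ is the joint CDF of $(Z_j-Z_{i_0})_{j\ne i_0}$; since every mixed partial of a joint CDF in distinct coordinates is nonnegative, while the chain rule contributes a factor $-1/\eta$ per differentiation in a coordinate $G_{i_k}$ with $i_k\ne i_0$, one gets $(-1)^j\,\partial^j C_{i_0}/\prod_k\partial G_{i_k}=\eta^{-j}\,\partial^j F_{i_0}/\prod_k\partial x_{i_k}\ge 0$, with strict inequality when the perturbation density is everywhere positive. (No independence of the $Z_i$ is used here; the i.i.d.\ case merely makes $F_{i_0}$ factorise after conditioning on $Z_{i_0}$.)

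For the ``if'' direction, assume $\mathbf{C}\in C^{K-1}$ satisfies (1)--(4). Condition (1) makes the vector field $\mathbf{C}$ conservative on the simply connected domain $\mathbb{R}^K$, so $\mathbf{C}=\nabla\tilde{\Phi}$ for some $\tilde{\Phi}$; (2) makes $\tilde{\Phi}$ convex; and $\mathbf{C}\in\mathcal{S}_K$ together with (3) forces $\langle\nabla\tilde{\Phi},\mathbf{1}\rangle\equiv 1$, i.e.\ translation equivariance, hence $\mathbf{C}$ itself is translation invariant. For each $i_0$, reversing the display above, I would define $F_{i_0}(\mathbf{x}):=C_{i_0}(\mathbf{G})$ with $G_{i_0}=0$ and $G_j=-\eta x_j$ for $j\ne i_0$: condition (4) at order $1$ makes $F_{i_0}$ coordinatewise nondecreasing, and at the top order $K-1$ makes $\partial^{K-1}F_{i_0}/\prod\partial x_j$ a nonnegative density, which yields the rectangle (``$n$-increasing'') inequalities; convexity and translation equivariance of $\tilde{\Phi}$ supply the limits $F_{i_0}\to 1$ (all $x_j\to\infty$) and $F_{i_0}\to 0$ (some $x_j\to-\infty$); so each $F_{i_0}$ is a genuine joint CDF. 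Finally I would check that $\{F_{i_0}\}_{i_0}$ is mutually consistent — the difference vectors $(Z_j-Z_{i_0})_{j\ne i_0}$ are the obvious affine images of one another — realize them by a single joint law of $(Z_1,\dots,Z_K)$ on $\mathbb{R}^K/\mathbb{R}\mathbf{1}$, and conclude $\mathrm{P}(\arg\max_j G_j+\eta Z_j=i_0)=F_{i_0}((G_{i_0}-G_j)/\eta:j)=C_{i_0}(\mathbf{G})$ for every $i_0$.

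The hard part will be that consistency step: the $F_{i_0}$ are manufactured arm by arm, and showing that they fit together into one perturbation is exactly where condition (1) — the assertion that all the $C_i$ are partials of the single potential $\tilde{\Phi}$ — is indispensable, since it is precisely the integrability that makes the arm-wise reconstructions compatible; without it one recovers only an inconsistent family. Lesser points to handle with care: the smoothness regime in which the higher-order conditions in (4) make classical sense ($\mathbf{C}\in C^{K-1}$, otherwise one works with distributional derivatives); the almost-sure uniqueness of the argmax, on which the envelope identity rests; and the fact that (2) is at best positive \emph{semi}definite, because (3) forces $\mathbf{1}$ into its kernel — so $\tilde{\Phi}$, $\mathbf{C}$, and the whole argument really live on the $(K-1)$-dimensional quotient $\mathbb{R}^K/\mathbb{R}\mathbf{1}$.
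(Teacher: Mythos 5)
This statement is imported by the paper from \citet{mcfadden1981econometric} without proof, so there is no in-paper argument to compare against; what follows is an assessment of your reconstruction on its own terms. Your ``only if'' direction is the standard and correct argument: routing everything through the social-surplus potential $\tilde{\Phi}(\mathbf{G})=\mathrm{E}[\max_j G_j+\eta Z_j]$, reading (1)--(3) off convexity and translation equivariance of $\tilde{\Phi}$, and getting (4) from the chain rule applied to $C_{i_0}(\mathbf{G})=F_{i_0}((G_{i_0}-G_j)/\eta : j\ne i_0)$ with $F_{i_0}$ the joint law of the differences $(Z_j-Z_{i_0})_{j\neq i_0}$. Your sign bookkeeping $(-1)^j\partial^j C_{i_0}=\eta^{-j}\partial^j F_{i_0}\ge 0$ is right, and your caveats (positive \emph{semi}definiteness forced by $\nabla^2\tilde{\Phi}\cdot\mathbf{1}=0$, strictness of (4) only under a positive density, the need for a.s.\ uniqueness of the argmax) correctly flag places where the theorem as stated in the paper is loose. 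Since the paper only ever invokes the necessity direction --- Theorem \ref{thm:tsallisfail} rules out a Tsallis-equivalent perturbation by exhibiting a violation of condition (4) --- this is the half that actually matters for the paper, and you have it.

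The genuine gap is in the sufficiency direction, at the step you yourself label ``the hard part'': you construct $K$ separate $(K-1)$-dimensional distribution functions $F_{i_0}$, one per arm, and then assert that they can be glued into a single law of $(Z_1,\dots,Z_K)$ on $\mathbb{R}^K/\mathbb{R}\mathbf{1}$, attributing the compatibility to condition (1). That attribution is plausible but is not an argument. Concretely: a single $F_1$ already determines a candidate perturbation (set $\tilde Z_1=0$ and $(\tilde Z_j)_{j\ne 1}\sim F_1$) and hence a candidate choice function $\tilde{\mathbf{C}}$ with $\tilde C_1=C_1$ by construction; the content of sufficiency is proving $\tilde C_i=C_i$ for $i\ne 1$, e.g.\ by showing the two potentials agree because their $G_1$-partials agree everywhere and both are translation-equivariant with the same behavior as $G_1\to\pm\infty$. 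None of that is carried out, nor is the verification that the function you call $F_{i_0}$ is right-continuous and genuinely $(K-1)$-increasing from only the first-order and top-order cases of (4) (for non-smooth $\mathbf{C}$ the intermediate orders are not automatic). So as written the proposal proves necessity and gives a credible but incomplete plan for sufficiency.
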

We now show that if the number of arms is greater than three, there does not exist any perturbation exactly equivalent to Tsallis entropy regularization. Therefore, the first approach to solving the open problem is doomed to failure.
\begin{theorem}[Proof in Appendix \ref{app:tsallisfial}] \label{thm:tsallisfail}
When $K \ge 4$, there is no stochastic perturbation that yields the same choice probability function as the Tsallis entropy regularizer.
\end{theorem}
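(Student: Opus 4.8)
The plan is to invoke the Williams--Daly--Zachary characterization (Theorem~\ref{thm:wdz}) and to show that its fourth condition fails for the Tsallis‑smoothed potential as soon as $K\ge 4$. The starting point is that conditions (1)--(3) hold automatically for \emph{every} FTRL potential $\tilde\Phi$ (the convex conjugate of $\eta\mathcal{R}$ over the simplex), hence in particular for $\mathcal{R}=\mathcal{R}_{T,\alpha}$: the choice map is $\mathbf{C}=\nabla\tilde\Phi$ with $\tilde\Phi$ convex and smooth, so $\mathcal{D}\mathbf{C}=\nabla^2\tilde\Phi$ is symmetric (1); it is positive definite modulo the all‑ones direction because $\mathcal{R}_{T,\alpha}$ is strictly convex with positive definite Hessian on the relative interior of $\Delta_{K-1}$ (2); and $\mathcal{D}\mathbf{C}\cdot\mathbf{1}=0$ because $\tilde\Phi(\mathbf{G}+t\mathbf{1})=\tilde\Phi(\mathbf{G})+t$ (3). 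So everything reduces to locating a reward vector at which the alternating‑sign condition (4) is violated.

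Next I would write the Tsallis choice function in closed form. The KKT conditions for $\max_{w\in\Delta_{K-1}}\langle w,\mathbf{G}\rangle-\eta\mathcal{R}_{T,\alpha}(w)$ give $C_i(\mathbf{G})=c\,(\lambda(\mathbf{G})-G_i)^{-\beta}$, where $\beta=1/(1-\alpha)>1$, $c=(\eta\alpha/(1-\alpha))^{\beta}$, and $\lambda(\mathbf{G})>\max_i G_i$ is the unique root of $\sum_i c\,(\lambda-G_i)^{-\beta}=1$; for $\alpha=\tfrac12$ this is simply $C_i=\eta^2/(\lambda-G_i)^{2}$. Setting $a_i=(\lambda-G_i)^{-1}$ and $S_m=\sum_{i=1}^K a_i^m$ (so $S_\beta$ is constant in $\mathbf{G}$), implicit differentiation of the normalization yields $\partial_j\lambda=a_j^{\beta+1}/S_{\beta+1}$, and after two more implicit differentiations --- with the many terms collapsing via repeated use of this identity --- one arrives at
\[
\frac{\partial^2 C_{i_0}}{\partial G_{i_1}\,\partial G_{i_2}}
= c\,\beta(\beta+1)\,\frac{a_{i_0}^{\beta+1}a_{i_1}^{\beta+1}a_{i_2}^{\beta+1}}{S_{\beta+1}^{2}}\,\Bigl(a_{i_0}+a_{i_1}+a_{i_2}-\tfrac{S_{\beta+2}}{S_{\beta+1}}\Bigr)
\]
for distinct $i_0,i_1,i_2$. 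Because the prefactor is strictly positive, the $j=2$ instance of condition (4) (which demands this be positive) is \emph{equivalent} to $a_{i_0}+a_{i_1}+a_{i_2}\ge S_{\beta+2}/S_{\beta+1}$.

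To break this inequality, note that $S_{\beta+2}/S_{\beta+1}\le\max_{i\in[K]}a_i$ always. When $K\ge 4$ I would pick distinct $i_0,i_1,i_2$ together with a fourth index $j$ and send $G_j\to+\infty$ with the remaining rewards held fixed: the normalization forces $\lambda\to\infty$, $a_i\to0$ for every $i\ne j$, and $a_j\to c^{-1/\beta}$, so $S_{\beta+2}/S_{\beta+1}\to a_j=c^{-1/\beta}$ while $a_{i_0}+a_{i_1}+a_{i_2}\to0$. Hence for $G_j$ large enough $\partial^2 C_{i_0}/(\partial G_{i_1}\partial G_{i_2})<0$, contradicting Theorem~\ref{thm:wdz}; so no additive perturbation --- a fortiori no i.i.d.\ perturbation of the sort used in GBPA --- induces the Tsallis choice function when $K\ge4$. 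The bound $K\ge4$ is exactly what makes room for the fourth index: for $K\le3$ the set $\{i_0,i_1,i_2\}$ is all of $[K]$, so $S_{\beta+2}/S_{\beta+1}\le\max_i a_i\le a_{i_0}+a_{i_1}+a_{i_2}$ and this obstruction disappears.

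The one unavoidable computation, and the main obstacle, is the derivation of the displayed second‑order formula: one has to differentiate the implicitly defined normalizer $\lambda(\mathbf{G})$ twice and verify that the resulting expression really does reorganize into the single factor $a_{i_0}+a_{i_1}+a_{i_2}-S_{\beta+2}/S_{\beta+1}$ rather than something messier. Everything after that is a one‑line limiting argument. It is worth flagging that the \emph{symmetric} reward vector is of no help --- there $S_{\beta+2}/S_{\beta+1}=\max_i a_i$ and the bracket equals $2\max_i a_i>0$, so the condition is satisfied --- which is why the violation has to be sought in the degenerate regime where one arm's reward escapes to infinity.
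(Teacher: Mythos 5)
Your proof is correct and takes essentially the same route as the paper: both invoke the Williams--Daly--Zachary theorem and refute condition (4) at $j=2$ via the same second mixed partial, whose sign is governed by $a_{i_0}+a_{i_1}+a_{i_2}-S_{\beta+2}/S_{\beta+1}$. The only difference is cosmetic --- you reach the degenerate regime by sending a fourth arm's reward to $+\infty$, whereas the paper equivalently fixes the choice probabilities at $(\epsilon,\epsilon,\epsilon,1-3\epsilon)$ and lets $\epsilon\to0$.
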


\subsection{Barrier Against Second Approach: Extreme Value Theory}\label{sec:evt}
The second approach is built on the work of \citet{abernethy2015fighting} who provided the-state-of-the-art perturbation based algorithm for adversarial multi-armed bandits. The framework proposed in this work covered all distributions with bounded hazard rate and showed that the regret of GBPA via perturbation $Z\sim \mathcal{D}$ with a bounded hazard is upper bounded by trade-off between the bound of hazard rate and expected block maxima as stated below.
\begin{theorem}[Theorem 4.2 \citep{abernethy2015fighting}]\label{thm:bounded}
Assume the support of $\mathcal{D}$ is unbounded in positive direction and hazard rate $h_{\mathcal{D}}(x) = \frac{f(x)}{1-F(x)}$ is bounded, then the expected regret of GBPA($\tilde{\Phi}$) in adversarial bandit is bounded by 
$\eta \cdot \mathrm{E}[M_K] + \frac{K \sup h_{\mathcal{D}}}{\eta} T$,
where $\sup h_{\mathcal{D}} = \sup_{x : f(x) >0}  h_{\mathcal{D}}(x)$.
The optimal choice of $\eta$ leads to the regret bound $2\sqrt{KT \cdot \sup h_{\mathcal{D}} \cdot \mathrm{E}[M_K]}$
where $M_K= \max_{i \in [K]} Z_i$. 
\end{theorem}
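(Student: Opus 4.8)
\noindent\emph{Plan.} I would run the standard potential-based (GBPA) regret analysis, peel the regret into three pieces, and invoke the bounded–hazard hypothesis only where it is genuinely needed. Write $\Phi(\mathbf{G}) = \max_i G_i$ for the unsmoothed potential and $\hat{\mathbf{G}}_t = \sum_{s \le t}\hat{\mathbf{g}}_s$. The first step is to observe that $\hat{\mathbf{g}}_t$ is conditionally unbiased for $\mathbf{g}_t$ and $\mathbf{p}_t = \nabla\tilde{\Phi}(\hat{\mathbf{G}}_{t-1})$ is $\mathcal{H}_{t-1}$-measurable, so $\mathrm{E}[\sum_t g_{t,A_t}] = \mathrm{E}[\sum_t \langle \mathbf{p}_t, \hat{\mathbf{g}}_t\rangle]$, while convexity of $\Phi$ and Jensen give $\Phi(\sum_t \mathbf{g}_t) \le \mathrm{E}[\Phi(\hat{\mathbf{G}}_T)]$. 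Telescoping $\tilde{\Phi}(\hat{\mathbf{G}}_T)$ down to $\hat{\mathbf{G}}_0 = \mathbf{0}$ then yields the decomposition
\[
\mathrm{R}(T) \;\le\; \underbrace{\mathrm{E}[\Phi(\hat{\mathbf{G}}_T) - \tilde{\Phi}(\hat{\mathbf{G}}_T)]}_{\text{(underestimation)}} \;+\; \underbrace{\tilde{\Phi}(\mathbf{0})}_{\text{(overestimation)}} \;+\; \underbrace{\mathrm{E}\Big[\textstyle\sum_{t=1}^T D_{\tilde{\Phi}}(\hat{\mathbf{G}}_t, \hat{\mathbf{G}}_{t-1})\Big]}_{\text{(divergence penalty)}},
\]
where $D_{\tilde{\Phi}}(\mathbf{x},\mathbf{y}) = \tilde{\Phi}(\mathbf{x}) - \tilde{\Phi}(\mathbf{y}) - \langle \nabla\tilde{\Phi}(\mathbf{y}), \mathbf{x}-\mathbf{y}\rangle$ is the Bregman divergence of $\tilde{\Phi}$.

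\noindent The first two terms are immediate. Since $\mathbf{Z}$ is centered, $\tilde{\Phi}(\mathbf{G}) = \mathrm{E}[\max_i (G_i + \eta Z_i)] \ge \max_i (G_i + \eta\,\mathrm{E}[Z_i]) = \Phi(\mathbf{G})$ for every $\mathbf{G}$, so the underestimation term is non-positive. And $\tilde{\Phi}(\mathbf{0}) = \mathrm{E}[\max_i \eta Z_i] = \eta\,\mathrm{E}[M_K]$, which is exactly the first term of the claimed bound.

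\noindent The crux is a bound on the diagonal Hessian entries of $\tilde{\Phi}$, i.e.\ a differential-consistency inequality. Conditioning on $Z_i$ and setting $W_i = \eta^{-1}\big(\max_{j\ne i}(G_j + \eta Z_j) - G_i\big)$, which is independent of $Z_i$, one has $\partial_i\tilde{\Phi}(\mathbf{G}) = \mathrm{P}(Z_i \ge W_i) = \mathrm{E}[1 - F(W_i)]$; differentiating once more in $G_i$ (the max of affine functions has slope $-1$ in $G_i$) gives $\partial^2_{ii}\tilde{\Phi}(\mathbf{G}) = \eta^{-1}\,\mathrm{E}[f(W_i)]$. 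Using $f(x) = h_{\mathcal{D}}(x)\,(1 - F(x)) \le (\sup h_{\mathcal{D}})(1 - F(x))$ pointwise, this yields
\[
\partial^2_{ii}\tilde{\Phi}(\mathbf{G}) \;\le\; \frac{\sup h_{\mathcal{D}}}{\eta}\,\partial_i\tilde{\Phi}(\mathbf{G}) \qquad \text{for all } \mathbf{G}, \; i .
\]
Now for the divergence penalty: because $\hat{\mathbf{g}}_t = (g_{t,A_t}/p_{t,A_t})\mathbf{e}_{A_t}$ is non-negative and supported on the single coordinate $A_t$, a second-order Taylor expansion along the segment $[\hat{\mathbf{G}}_{t-1}, \hat{\mathbf{G}}_t]$ reduces $D_{\tilde{\Phi}}(\hat{\mathbf{G}}_t, \hat{\mathbf{G}}_{t-1})$ to $\tfrac12\,(g_{t,A_t}/p_{t,A_t})^2\,\partial^2_{A_t A_t}\tilde{\Phi}(\xi_t)$ for some $\xi_t$ on that segment; the displayed inequality together with $\partial_i\tilde{\Phi} \le 1$ controls the Hessian term, and after taking the conditional expectation over $A_t \sim \mathbf{p}_t$ and using $g_{t,i}\in[0,1]$ one is left with at most $(K\sup h_{\mathcal{D}})/\eta$ per round, hence $(K\sup h_{\mathcal{D}})/\eta \cdot T$ after summing. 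Combining the three pieces gives $\mathrm{R}(T) \le \eta\,\mathrm{E}[M_K] + (K\sup h_{\mathcal{D}})/\eta \cdot T$, and minimizing over $\eta$ at $\eta^\star = \sqrt{KT\sup h_{\mathcal{D}}/\mathrm{E}[M_K]}$ gives $2\sqrt{KT \cdot \sup h_{\mathcal{D}}\cdot \mathrm{E}[M_K]}$.

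\noindent I expect the divergence-penalty step to be the real work. The Taylor remainder must be evaluated at an intermediate point $\xi_t$ whose $A_t$-th coordinate has been displaced by the (possibly large) importance weight $g_{t,A_t}/p_{t,A_t}$, so one has to argue that the displacement does not blow up $\partial^2_{A_t A_t}\tilde{\Phi}$, and that after averaging over the random pulled arm the importance-weighting variance is tamed to $O(K)$ per round rather than $O(\sum_i 1/p_{t,i})$ --- this is precisely where the non-negativity of the rewards, the boundedness $g_{t,i}\in[0,1]$, and the single-coordinate structure of $\hat{\mathbf{g}}_t$ all enter. By contrast the differential-consistency inequality displayed above is the conceptual heart but only a short computation once the right conditional representation of $\partial^2_{ii}\tilde{\Phi}$ is written down, and the remaining ingredients (the decomposition, the two easy terms, the $\eta$-optimization) are routine.
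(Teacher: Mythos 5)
First, a bookkeeping point: the paper you are working from does not prove this statement at all --- it is imported verbatim from Abernethy, Lee and Tewari (2015) --- so there is no in-paper proof to compare against. Your skeleton nonetheless matches the original argument: the three-term GBPA decomposition, the non-positivity of the underestimation term, the identification of the overestimation term with $\eta\,\mathrm{E}[M_K]$, and the differential-consistency inequality $\partial^2_{ii}\tilde{\Phi}(\mathbf{G}) \le (\sup h_{\mathcal{D}}/\eta)\,\partial_i\tilde{\Phi}(\mathbf{G})$ derived from the bounded hazard rate are all correct and are exactly the right ingredients.

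The gap is in the divergence-penalty step, and it is a genuine one, not bookkeeping. As written, your chain is $\partial^2_{A_tA_t}\tilde{\Phi}(\xi_t) \le (\sup h_{\mathcal{D}}/\eta)\,\partial_{A_t}\tilde{\Phi}(\xi_t) \le \sup h_{\mathcal{D}}/\eta$, which gives $D_{\tilde{\Phi}}(\hat{\mathbf{G}}_t,\hat{\mathbf{G}}_{t-1}) \le \tfrac{\sup h_{\mathcal{D}}}{2\eta}(g_{t,A_t}/p_{t,A_t})^2$ and, after averaging over $A_t\sim\mathbf{p}_t$, a per-round bound of $\tfrac{\sup h_{\mathcal{D}}}{2\eta}\sum_i g_{t,i}^2/p_{t,i}$. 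This is $O(K)$ only if every $p_{t,i}$ is bounded below, which is false. To land on $\sum_i g_{t,i}^2 \le K$ you need one factor of $p_{t,A_t}$ to survive the cancellation against $(1/p_{t,A_t})^2$, i.e.\ you need the Hessian along the update segment to be controlled by $(\sup h_{\mathcal{D}}/\eta)\,\partial_{A_t}\tilde{\Phi}(\hat{\mathbf{G}}_{t-1}) = (\sup h_{\mathcal{D}}/\eta)\,p_{t,A_t}$ --- the gradient at the \emph{pre-update} point, not at $\xi_t$. In the reward formulation the update \emph{increases} coordinate $A_t$, so $\partial_{A_t}\tilde{\Phi}(\xi_t) \ge p_{t,A_t}$ and can be of order one even when $p_{t,A_t}$ is tiny (Gumbel perturbation with two arms: $\partial^2_{11}\tilde{\Phi} = \eta^{-1}p_1(1-p_1)$ equals $1/(4\eta)$ at the displaced point while $p_{1}$ at the start may be $0.01$), so the naive chain cannot be repaired by monotonicity in that direction. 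The standard way to close this is to arrange the update so that the pulled arm's coordinate moves in the direction that \emph{decreases} its choice probability (the loss, or ``gain minus one,'' estimator), whence $\partial_{A_t}\tilde{\Phi}(\xi_t)\le p_{t,A_t}$ and the importance weights collapse to $\sum_i g_{t,i}^2\le K$. You correctly flagged this as ``the real work,'' but the argument you sketch for it does not go through as stated.
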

\citet{abernethy2015fighting} considered several perturbations such as Gumbel, Gamma, Weibull, Fr{\'e}chet and Pareto. The best tuning of distribution parameters (to minimize \emph{upper bounds} on the product $\sup h_{\mathcal{D}} \cdot \mathrm{E}[M_K]$) always leads to the bound $\mathcal{O}(\sqrt{KT \log K})$, which is tantalizingly close to the lower bound but does not match it. It is possible that some of their upper bounds on expected block maxima $\mathrm{E}[M_K]$ are loose and that we can get closer, or perhaps even match, the lower bound by simply doing a better job of bounding expected block maxima (we will not worry about supremum of the hazard since their bounds can easily be shown to be tight, up to constants, using elementary calculations in Appendix \ref{app:asym}). We show that this approach will also not work by \emph{characterizing} the asymptotic (as $K \to \infty$) behavior of block maxima of perturbations using extreme value theory.
The statistical behavior of block maxima, $M_K=\max_{i \in [K]} Z_i$, where $Z_i$'s is a sequence of i.i.d. random variables with distribution function $F$ can be described by one of three extreme value distributions: Gumbel, Fr{\'e}chet and Weibull \citep{coles2001introduction, resnick2013extreme}. 
Then, the normalizing sequences $\left\lbrace a_K >0 \right\rbrace$ and $\left\lbrace b_K \right\rbrace$ are explicitly characterized \citep{leadbetter2012extremes}. Under the mild condition,
$\mathrm{E} \big( {(M_K - b_K)}/{a_K}  \big) \rightarrow \mathrm{E}_{Z\sim G}[Z] = C$ as $K \to \infty$ where $G$ is extreme value distribution and $C$ is constant, and the expected block maxima behave asymptotically as $\mathrm{E} [M_K] = \Theta(C\cdot a_K + b_K)$. See Theorem \ref{thm:evt}-\ref{thm:frechet} in Appendix \ref{app:evt} for more details.

\begin{table}[ht]
\caption {Asymptotic expected block maxima based on Extreme Value Theory. Gumbel-type and Fr{\'e}chet-type are denoted by $\Lambda$ and $\Phi_\alpha$ respectively. The Gamma function and the Euler-Mascheroni constant are denoted by $\Gamma(\cdot)$ and $\gamma$ respectively.}
\label{table1}
  \centering
  \begin{tabular}{cccc}
    \toprule
    Distribution    &   Type    &   $\sup h$    &  
    $\mathrm{E}[ M_K ]$ \\
    \midrule
    Gumbel($\mu=0, \beta=1$)       & $\Lambda$   & 1     & $\log K+\gamma + o(1)$    \\
    Gamma($\alpha, 1$)        & $\Lambda$  & 1      & $\log K+\gamma + o(\log K) $  \\
    Weibull($\alpha \le 1$)      & $\Lambda$       & $\alpha$             & $(\log K)^{1/\alpha} + o((\log K)^{1/\alpha})$ \\
    Fr{\'e}chet ($\alpha>1$)     & $\Phi_{\alpha}$ &  $\in (\frac{\alpha}{e-1},2\alpha)$&  $\Gamma(1-1/\alpha)\cdot K^{1/\alpha}$ \\
    Pareto($x_m=1, \alpha$)       & $\Phi_{\alpha}$ & $\alpha$&  $\Gamma(1-1/\alpha)\cdot(K^{1/\alpha}-1)$  \\
    \bottomrule
  \end{tabular}
\end{table}
The asymptotically tight growth rates (with explicit constants for the leading term!) of expected block maximum of some distributions are given in Table \ref{table1}. They match the upper bounds of the expected block maximum in Table 1 of \citet{abernethy2015fighting}. That is, their upper bounds are asymptotically tight. Gumbel, Gamma and Weibull distribution are Gumbel-type ($\Lambda$) and their expected block maximum behave as $\mathcal{O}(\log K)$ asymptotically. It implies that Gumbel type perturbation can never achieve optimal regret bound despite bounded hazard rate. Fr{\'e}chet and Pareto distributions are Fr{\'e}chet-type ($\Phi_{\alpha}$) and their expected block maximum grows as $K^{1/\alpha}$. Heuristically, if $\alpha$ is set optimally to $\log K$, the expected block maxima is independent of $K$ while the supremum of hazard is upper bounded by $\mathcal{O}(\log K)$. 
\paragraph{Conjecture} \emph{If there exists a perturbation that achieves minimax optimal regret in adversarial multi-armed bandits, it must be of Fr{\'e}chet-type.}

Fr{\'e}chet-type perturbations can still possibly yield the optimal regret bound in perturbation based algorithm if the expected block maximum is asymptotically bounded by a constant and the divergence term in regret analysis of GBPA algorithm can be shown to enjoy a tighter bound than what follows from the assumption of a bounded hazard rate.

{\bf The perturbation equivalent to Tsallis entropy (in two armed setting) is of Fr{\'e}chet-type} Further evidence to support the conjecture can be found in the connection between FTRL and FTPL algorithms that regularizer $\mathcal{R}$ and perturbation $Z \sim F_{\mathcal{D}}$ are bijective in 
two-armed bandit in terms of a mapping between $F_{\mathcal{D}^{\star}}$ and $\mathcal{R}$, $\mathcal{R}(w) - \mathcal{R}(0) = - \int_0^w F^{-1}_{\mathcal{D}^{\star}}(1-z) dz$, where $Z_1, Z_2$ are i.i.d random variables with distribution function, $F_{\mathcal{D}}$, and then $Z_1 - Z_2 \sim F_{\mathcal{D}^{\star}}$. 
The difference of two i.i.d. Fr{\'e}chet-type distributed random variables is conjectured to be Fr{\'e}chet-type. 
Thus, Tsallis entropy in two-armed setting leads to Fr{\'e}chet-type perturbation, which supports our conjecture about optimal perturbations in adversarial multi-armed bandits. See Appendix \ref{app:twoarm} for more details.

\section{Numerical Experiments}
We present some experimental results with perturbation based algorithms (Alg.1-(\romannum{2}),(\romannum{3})) and compare them to the UCB1 algorithm in the simulated stochastic $K$-armed bandit. In all experiments, the number of arms ($K$) is 10, the number of different episodes is 1000, and true mean rewards ($\mu_i$) are generated from $\mathcal{U}[0,1]$ \citep{kuleshov2014algorithms}. We consider the following four examples of 1-sub-Gaussian reward distributions that will be shifted by true mean $\mu_i$; (a) Uniform, $\mathcal{U}[-1, 1]$, (b) Rademacher, $\mathcal{R}\{-1,1\}$, (c) Gaussian, $\mathcal{N}(0,1)$, and (d)
Gaussian mixture, $W \cdot \mathcal{N}(-1, 1) + (1-W) \cdot \mathcal{N}(1, 1)$ where $W \sim {\rm Bernoulli} (1/2)$. 
Under each reward setting, we run five different algorithms; UCB1, RCB with Uniform and Rademacher, and FTPL via Gaussian $\mathcal{N}(0,\sigma^2)$ and Double-exponential ($\sigma$) after we use grid search to tune confidence levels for confidence based algorithms and the parameter $\sigma$ for FTPL algorithms. All tuned confidence level and parameter are specified in Figure \ref{fig1}. We compare the performance of perturbation based algorithms to UCB1 in terms of average regret $R(t)/t$, which is expected to more rapidly converge to zero if the better algorithm is used.
\footnote{https://github.com/Kimbaekjin/Perturbation-Methods-StochasticMAB} 

The average regret plots in Figure \ref{fig1} have the similar patterns that FTPL algorithms via Gaussian and Double-exponential consistently perform the best after parameters tuned, while UCB1 algorithm works as well as them in all rewards except for Rademacher reward. The RCB algorithms with Uniform and Rademacher perturbations are slightly worse than UCB1 in early stages, but perform comparably well to UCB1 after enough iterations. In the Rademacher reward case, which is discrete, RCB with Uniform perturbation slightly outperforms UCB1. 

Note that the main contribution of this work is to establish theoretical foundations for a large family of perturbation based algorithms (including those used in this section). Our numerical results are not intended to show the superiority of perturbation methods but to demonstrate that they are competitive with Thompson Sampling and UCB. Note that in more complex bandit problems, sampling from the posterior and optimistic optimization can prove to be computationally challenging. Accordingly, our work paves the way for designing efficient perturbation methods in complex settings, such as stochastic linear bandits and stochastic combinatorial bandits, that have both computational advantages and low regret guarantees. Furthermore, perturbation approaches based on the Double-exponential distribution are of special interest from a privacy viewpoint since that distribution figures prominently in the theory of differential privacy \citep{dwork2014algorithmic,tossou2016algorithms,tossou2017achieving}.
\begin{figure}[h]
\centering
\subcaptionbox{Uniform Reward, $\mathcal{U}[-1, 1]$
}{\includegraphics[width=0.49\textwidth]{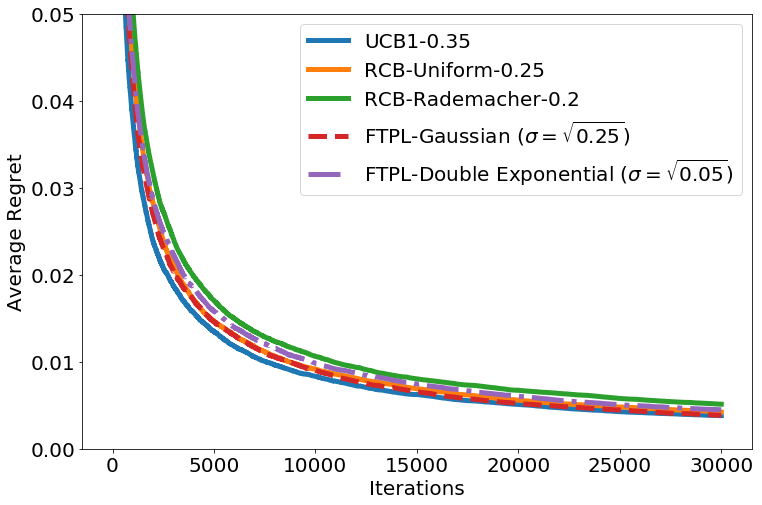}}
\subcaptionbox{Rademacher Reward, $\mathcal{R}\{ -1, 1\}$
}{\includegraphics[width=0.49\textwidth]{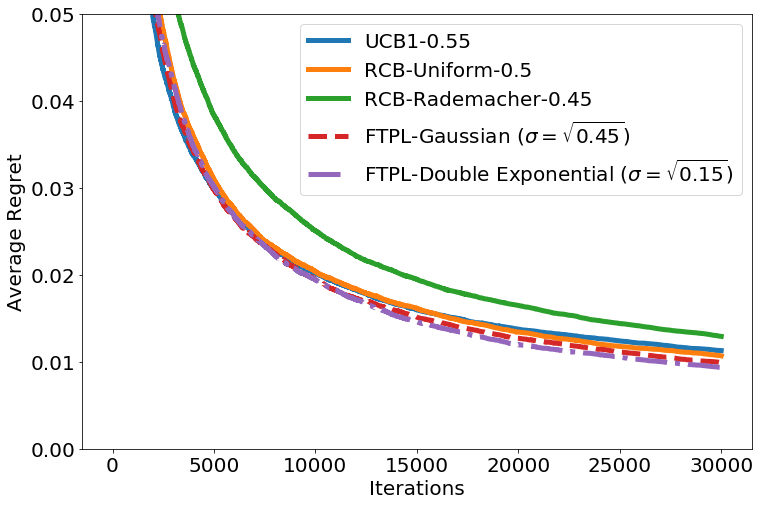}}
\subcaptionbox{Gaussian Reward, $\mathcal{N}(0,1)$
}{\includegraphics[width=0.49\textwidth]{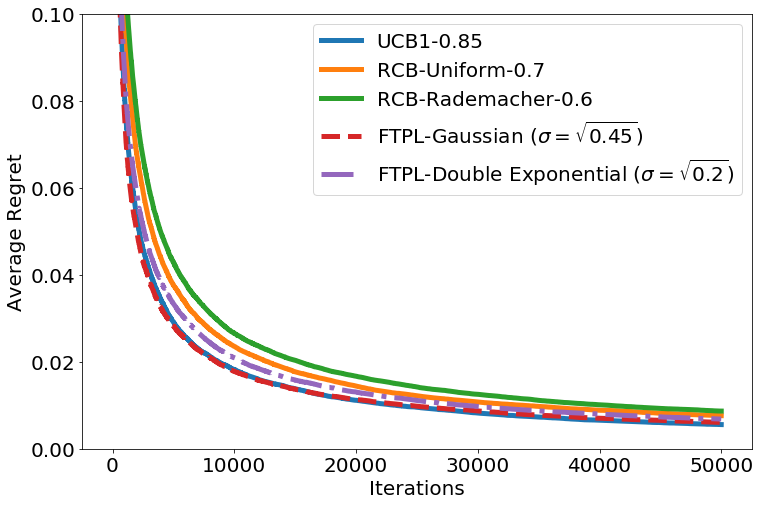}}
\subcaptionbox{Gaussian Mixture, $0.5\cdot \mathcal{N}(-1,1) + 0.5\cdot \mathcal{N}(1,1)$
}{\includegraphics[width=0.49\textwidth]{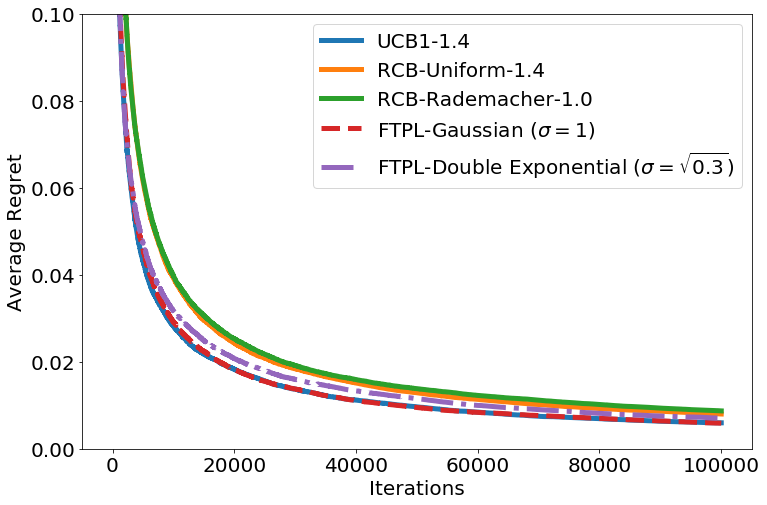}}
\caption{Average Regret for Bandit algorithms in four reward settings (Best viewed in color)}
\label{fig1}
\end{figure}

\section{Conclusion}
We provided the first general analysis of perturbations for the stochastic multi-armed bandit problem. We believe that our work paves the way for similar extension for more complex settings, e.g., stochastic linear bandits, stochastic partial monitoring, and Markov decision processes. We also showed that the open problem regarding minimax optimal perturbations for adversarial bandits cannot be solved in two ways that might seem very natural. While our results are negative, they do point the way to a possible affirmative solution of the problem. They led us to a conjecture that the optimal perturbation, if it exists, will be of Fr{\'e}chet-type.

\subsubsection*{Acknowledgments}
We acknowledge the support of NSF CAREER grant IIS-1452099 and the UM-LSA Associate Professor Support Fund. AT was also supported by a Sloan Research Fellowship.

\bibliographystyle{plainnat}
\bibliography{Citation}
\newpage
\appendix

\section{Proofs}

\subsection{Proof of Theorem \ref{thm:all}} \label{app:main}
\begin{proof}
For each arm $i \neq 1$, we will choose two thresholds $x_i = \mu_i + \frac{\Delta_i}{3}, y_i = \mu_1 - \frac{\Delta_i}{3}$ such that $\mu_i < x_i < y_i < \mu_1$ and define two types of events, $E_i^{\mu}(t) = \{ \hat{\mu}_i(t) \le x_i \}$, and $E_i^{\theta}(t) = \{ \theta_i(t) \le y_i \}$. Intuitively, $E_i^{\mu}(t)$ and $E_i^{\theta}(t)$ are the events that the estimate $\hat{\mu}_i$ and the sample value $\theta_i(t)$ are not too far above the mean $\mu_i$, respectively. 
$\mathrm{E}[ T_i(T)] =  \sum_{t=1}^T \mathrm{P}(A_t = i)$ is decomposed into the following three parts according to events $E_i^{\mu}(t)$ and $E_i^{\theta}(t)$,
\begin{align*}
\mathrm{E}[T_i(T)] & = \underbrace{\sum_{t=1}^T \mathrm{P}(A_t = i, (E^{\mu}_i(t))^c)}_{(a)} + \underbrace{\sum_{t=1}^T \mathrm{P}(A_t = i, E^{\mu}_i(t), (E_i^{\theta}(t))^c)}_{(b)} \\ &+ \underbrace{\sum_{t=1}^T \mathrm{P}(A_t = i, E^{\mu}_i(t), E_i^{\theta}(t)}_{(c)})
\end{align*}
Let $\tau_k$ denote the time at which $k$-th trial of arm $i$ happens. Set $\tau_0 =0$.
\begin{align*}
(a) &
      \le 1 + \sum_{k=1}^{T-1} \mathrm{P}((E^{\mu}_i(\tau_k +1))^c) \le 1 + \sum_{k=1}^{T-1} \exp\big( - \frac{k(x_i -\mu_i)^2}{2} \big) 
      \le 1 + \frac{18}{\Delta_i^2}.  \numberthis \label{eq:a}
\end{align*}
The probability in part $(b)$ is upper bounded by 1 if $T_i(t)$ is less than $L_i(T) = \frac{\sigma^2[ 2\log(T \Delta_i^2)]^{2/p}}{(y_i - x_i)^2}$, and by ${C_a}/{(T\Delta_i^2)}$ otherwise. The latter can be proved as below, 
\begin{align*}
\mathrm{P}(A_t = i, (E_i^{\theta}(t))^c | E^{\mu}_i(t) )   &\le \mathrm{P}(\theta_i(t) > y_i | \hat{\mu}_i(t) \le x_i ) \le \mathrm{P}\Big(\frac{Z_{it}}{\sqrt{T_i(t)}}> y_i-x_i | \hat{\mu}_i(t) \le x_i \Big) \\
     &\le C_a \cdot \exp \big( - \frac{T_i(t)^{p/2} (y_i -x_i)^p}{2\sigma^{p}} \big)
      \le \frac{C_a}{T\Delta_i^2}\;\; \text{if} \;\; T_i(t) \ge L_i(T).
\end{align*}
The third inequality holds by sub-Weibull ($p$) assumption on perturbation $Z_{it}$. Let $\tau$ be the largest step until $T_i(t) \le L_i(T)$, then part $(b)$ is bounded as, $(b) \le L_i(T) + \sum_{t=\tau+1}^T {C_a}/({T\Delta_i^2})  \le L_i(T) + {C_a}/{\Delta_i^2}.$ 

Regarding part $(c)$, define $p_{i,t}$ as the probability $p_{i,t} = \mathrm{P}(\theta_1(t) > y_i | \mathcal{H}_{t-1})$ where $\mathcal{H}_{t-1}$ is defined as the history of plays until time $t-1$. Let $\delta_j$ denote the time at which $j$-th trial of arm $1$ happens. 
\begin{lemma} [Lemma 1 \citep{agrawal2013further}] \label{lemma:corelemma}
For $i\neq 1$,
\begin{align*}
(c) &= \sum_{t=1}^T \mathrm{P}(A_t = i, E^{\mu}_i(t), E_i^{\theta}(t)) \\
& \le \sum_{t=1}^T \mathrm{E} \Big[ \frac{1-p_{i,t}}{p_{i,t}} I(A_t = 1, E^{\mu}_i(t), E_i^{\theta}(t))\Big] \le  \sum_{j=0}^{T-1}\mathrm{E} \Big[ \frac{1-p_{i,\delta_j+1}}{p_{i,\delta_j+1}} \Big].
\end{align*}
\end{lemma}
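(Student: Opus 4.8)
\emph{Proof plan.} The plan is to reproduce the swap argument of \citet{agrawal2013further}. The intuition is that on $E^{\mu}_i(t)\cap E^{\theta}_i(t)$ the perturbed value $\theta_i(t)$ of arm $i$ lies at or below $y_i$, so whenever the algorithm still pulls arm $i$ on such a round it must be because arm $1$'s perturbed value also fell at or below $y_i$; charging this ``missed'' pull of arm $1$ against the probability $p_{i,t}$ that arm $1$ would have won produces the factor $(1-p_{i,t})/p_{i,t}$. Write $\theta_j(t)=\hat{\mu}_j+Z_{jt}/\sqrt{1\vee T_j}$ for the perturbed estimate of arm $j$ used at round $t$, where $\hat{\mu}_j,T_j$ are the sample mean and pull count of arm $j$ at the start of round $t$ (hence $\mathcal{H}_{t-1}$-measurable) and $A_t=\arg\max_j\theta_j(t)$; recall $p_{i,t}=\mathrm{P}(\theta_1(t)>y_i\mid\mathcal{H}_{t-1})$, and note the perturbations $Z_{1t},\dots,Z_{Kt}$ are i.i.d.\ and independent of $\mathcal{H}_{t-1}$.

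First I would prove, round by round, that $\mathrm{P}(A_t=i,E^{\mu}_i(t),E^{\theta}_i(t))\le\mathrm{E}\big[\tfrac{1-p_{i,t}}{p_{i,t}}I(A_t=1,E^{\mu}_i(t),E^{\theta}_i(t))\big]$. Condition on the $\sigma$-field $\mathcal{G}_t$ generated by $\mathcal{H}_{t-1}$ together with $\{Z_{jt}:j\neq1\}$. Given $\mathcal{G}_t$ the estimates $\hat{\mu}_j$ are fixed (so $E^{\mu}_i(t)$ is $\mathcal{G}_t$-measurable), the values $\theta_j(t)$ for $j\neq1$ are fixed (so $E^{\theta}_i(t)$, the competing maximum $M:=\max_{j\neq1}\theta_j(t)$ and its maximizer $j^{\star}$ are $\mathcal{G}_t$-measurable), $p_{i,t}$ is $\mathcal{H}_{t-1}$- hence $\mathcal{G}_t$-measurable, and $\theta_1(t)$ conditioned on $\mathcal{G}_t$ has the same law as conditioned on $\mathcal{H}_{t-1}$ since $Z_{1t}$ is independent of the extra conditioning. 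On $E^{\mu}_i(t)\cap E^{\theta}_i(t)$: if $j^{\star}\neq i$ then $A_t=i$ is impossible, so $\mathrm{P}(A_t=i\mid\mathcal{G}_t)=0$; if $j^{\star}=i$ then $\theta_i(t)=M\le y_i$, so using $\{A_t=i\}\subseteq\{\theta_1(t)\le\theta_i(t)\}$ and $\{A_t=1\}\supseteq\{\theta_1(t)>M\}$ we obtain $\mathrm{P}(A_t=i\mid\mathcal{G}_t)\le\mathrm{P}(\theta_1(t)\le y_i\mid\mathcal{H}_{t-1})=1-p_{i,t}$ and $\mathrm{P}(A_t=1\mid\mathcal{G}_t)\ge\mathrm{P}(\theta_1(t)>y_i\mid\mathcal{H}_{t-1})=p_{i,t}$, hence $\mathrm{P}(A_t=i\mid\mathcal{G}_t)\le\tfrac{1-p_{i,t}}{p_{i,t}}\mathrm{P}(A_t=1\mid\mathcal{G}_t)$. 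In either case, multiplying by the $\mathcal{G}_t$-measurable indicator $I(E^{\mu}_i(t),E^{\theta}_i(t))$ and taking expectations gives the claimed per-round bound; summing over $t$ produces the middle expression of the lemma, and then bounding $I(E^{\mu}_i(t),E^{\theta}_i(t))\le1$ leaves $\sum_{t=1}^{T}\mathrm{E}\big[\tfrac{1-p_{i,t}}{p_{i,t}}I(A_t=1)\big]$.

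It remains to bound this by $\sum_{j=0}^{T-1}\mathrm{E}\big[\tfrac{1-p_{i,\delta_j+1}}{p_{i,\delta_j+1}}\big]$. Since $p_{i,t}$ depends only on $\hat{\mu}_1$ and $T_1$ at the start of round $t$, and these change only on the rounds $\delta_1<\delta_2<\cdots$ at which arm $1$ is pulled (with $\delta_0:=0$), we have $p_{i,t}=p_{i,\delta_j+1}$ for every $t$ in the block $(\delta_j,\delta_{j+1}]$, throughout which arm $1$ has been pulled exactly $j$ times. As $I(A_t=1)$ is supported on $\{\delta_1,\delta_2,\dots\}$ and each $\delta_{j+1}$ lies in the block $(\delta_j,\delta_{j+1}]$, it follows that $\sum_{t=1}^{T}\tfrac{1-p_{i,t}}{p_{i,t}}I(A_t=1)=\sum_{j\ge0:\,\delta_{j+1}\le T}\tfrac{1-p_{i,\delta_j+1}}{p_{i,\delta_j+1}}\le\sum_{j=0}^{T-1}\tfrac{1-p_{i,\delta_j+1}}{p_{i,\delta_j+1}}$, the added terms being nonnegative (under any convention for $\delta_j$ once arm $1$ has been pulled fewer than $j$ times). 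Taking expectations completes the proof.

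The only delicate point is the choice of the conditioning $\sigma$-field $\mathcal{G}_t$: it must render the good events, the competing maximum $M$, and $p_{i,t}$ itself measurable while keeping $\theta_1(t)$ ``fresh,'' after which one must dispose of the case $j^{\star}\neq i$ and of possible ties in the $\arg\max$ (harmless, since the inclusions $\{A_t=i\}\subseteq\{\theta_1(t)\le\theta_i(t)\}$ and $\{A_t=1\}\supseteq\{\theta_1(t)>M\}$ hold under any tie-breaking rule). The remaining steps are elementary.
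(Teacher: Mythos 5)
Your proposal is correct and follows essentially the same route as the paper's proof: the per-round swap argument comparing $\mathrm{P}(A_t=i)$ to $\mathrm{P}(A_t=1)$ using the freshness of $\theta_1(t)$ given the history, followed by the block decomposition over the pulls of arm $1$ where $p_{i,t}$ is constant. The only cosmetic difference is that you condition on all perturbations of arms $j\neq 1$ (making the competing maximum deterministic) whereas the paper conditions on $H_{t-1}$ and the event $E_i^{\theta}(t)$ and factors out $\mathrm{P}(\theta_j\le y_i,\ \forall j\notin\{1,i\})$; both hinge on the same independence and yield identical bounds.
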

\begin{proof}
See Appendix \ref{app:corelemma}.
\end{proof}
The average rewards from the first arm, $\hat{\mu}_1(\delta_j+1)$, has a density function denoted by $\phi_{\hat{\mu}_{1,j}}$.
\begin{align*}
\mathrm{E} \Big[ \frac{1-p_{i,\delta_j+1}}{p_{i,\delta_j+1}} \Big]   =& \mathrm{E} \Big[ \frac{1}{\mathrm{P}(\theta_1(\delta_j+1) \ge y_i | \mathcal{H}_{\delta_j+1})}-1 \Big] \\
=& \int_{\mathbb{R}} \Big[ \frac{1}{\mathrm{P}\big(x+\frac{Z}{
\sqrt{j}}> \mu_1 - \frac{\Delta_i}{3}\big)}-1\Big] \phi_{\hat{\mu}_{1,j}}(x) dx
\end{align*}
The above integration is divided into three intervals, $(-\infty, \mu_1 -\frac{\Delta_i}{3}], (\mu_1 -\frac{\Delta_i}{3}, \mu_1 -\frac{\Delta_i}{6}]$, and $(\mu_1 -\frac{\Delta_i}{3},\infty)$. We denote them as $(\romannum{1}), (\romannum{2})$ and $(\romannum{3})$, respectively.
\begin{align*}
& \int_{-\infty}^{\mu_1 -\frac{\Delta_i}{3}} \Big[ \frac{1}{\mathrm{P}\big(Z > -\sqrt{j}(x-\mu_1+\frac{\Delta_i}{3})\big)} -C_b \Big] \phi_{\hat{\mu}_{1,j}}(x) dx 
\\
& = \int_{0}^{\infty} \Big[ \frac{1}{\mathrm{P}(Z > u)} -C_b \Big] \frac{1}{\sqrt{j}}  \phi_{\hat{\mu}_{1,j}}\big(-\frac{u}{\sqrt{j}} +\mu_1 - \frac{\Delta_i}{3}\big)du 
\quad \because u= - \sqrt{j} \big( x- \mu_1 + \frac{\Delta_i}{3}\big)\\
& \le   \int_{0}^{\infty} \Big[ C_b \cdot \exp \big(\frac{u^q}{2\sigma^q}\big) -C_b \Big] \frac{1}{\sqrt{j}}  \phi_{\hat{\mu}_{1,j}}\big(-\frac{u}{\sqrt{j}} +\mu_1 - \frac{\Delta_i}{3}\big)du 
\;\; \because \text{anti-concentration inequality}\\
& =   \int_{0}^{\infty} \Big[ \int_{0}^u G'(v)dv \Big] \frac{1}{\sqrt{j}}  \phi_{\hat{\mu}_{1,j}}\big(-\frac{u}{\sqrt{j}} +\mu_1 - \frac{\Delta_i}{3}\big)du 
 \quad \because G(u) = C_b \cdot \exp \big(\frac{u^q}{2\sigma^q}\big)\\
& \le   \int_{0}^{\infty} \exp \big( - \frac{(v+ \frac{\sqrt{j}\Delta_i}{3})^2}{2}\big) \cdot G'(v)dv 
\quad \because \text{Fubini's Theorem }\& \text{ sub-Gaussian reward}\\
& =   \int_{0}^{\infty} \exp \big( - \frac{(v+\frac{\sqrt{j}\Delta_i}{3})^2}{2}\big)  \cdot C_b\frac{ q v^{q-1}}{2\sigma^q} \exp\big( \frac{v^q}{2
\sigma^q} \big)dv 
\\
&
 \le C_b M_{q,\sigma}  \exp \big(-\frac{j \Delta_i^2}{18}\big)  
 \quad \because \exists \;0< M_{q,\sigma} < \infty \text{ if } q <2 \text{ or } (q =2, \sigma \ge 1)
 \end{align*}
\begin{align*}
(\romannum{1}) &= \int_{-\infty}^{\mu_1 -\frac{\Delta_i}{3}}\Big[ \big[ \frac{1}{\mathrm{P}\big(Z > -\sqrt{j}(x-\mu_1+\frac{\Delta_i}{3})\big)} -C_b \big] \phi_{\hat{\mu}_{1,j}}(x) + (C_b - 1) \phi_{\hat{\mu}_{1,j}}(x) \Big] dx \\
& \le C_b M_{q,\sigma}  \exp \big(-\frac{j \Delta_i^2}{18}\big) + (C_b-1) \exp\big( -\frac{j \Delta_i^2}{18}\big)\\
(\romannum{2}) & = \int_{\mu_1 -\frac{\Delta_i}{3}}^{\mu_1 -\frac{\Delta_i}{6}} 2\mathrm{P}\big(Z < -\sqrt{j}(x - \mu_1 + \frac{\Delta_i}{3})\big)  \phi_{\hat{\mu}_{1,j}}(x) dx \\
& \le 2\mathrm{P}(Z < 0) \cdot\mathrm{P}\big( \mu_1 -\frac{\Delta_i}{3} \le \hat{\mu}_{1,j} \le \mu_1 -\frac{\Delta_i}{6}\big)\le 
 2\mathrm{P}\big( \hat{\mu}_{1,j} \le \mu_1 -\frac{\Delta_i}{6}\big)  \le 2 \exp \big(-\frac{j \Delta_i^2}{72}\big)\\
(\romannum{3}) & = \int_{\mu_1 -\frac{\Delta_i}{6}}^{\infty} 2 \mathrm{P}\big(Z < -\sqrt{j}(x - \mu_1 + \frac{\Delta_i}{3})\big) \phi_{\hat{\mu}_{1,j}}(x) dx \\
& \le  2 \mathrm{P}\big(Z < -\frac{\sqrt{j}\Delta_i}{6}\big)  \int_{\mu_1 -\frac{\Delta_i}{6}}^{\infty}  \phi_{\hat{\mu}_{1,j}}(x) dx  \le 2 \mathrm{P}\big(Z < -\frac{\sqrt{j}\Delta_i}{6}\big)  \le 2C_a \exp \big( - \frac{j^{p/2}\Delta_i^p}{2\cdot (6\sigma)^p}\big)
\end{align*}
\begin{align}
(c) = \sum_{j=0}^{T-1} (\romannum{1})+ (\romannum{2}) + (\romannum{3})  < \frac{18C_b (M_{q,\sigma}+1) + 126 }{\Delta_i^2} + \frac{4 C_a(6\sigma)^p}{\Delta_i^p}
\end{align}
Combining parts $(a)$, $(b)$, and $(c)$,  
\begin{align*}
\mathrm{E} [T_i(T)]  & \le 1 +   \frac{144 + C_a + 18C_b (M_{q,\sigma}+1)}{\Delta_i^2} +  \frac{4 C_a(6\sigma)^p}{\Delta_i^p} +\frac{\sigma^2[ 2\log(T \Delta_i^2)]^{2/p}}{(y_i - x_i)^2}
\end{align*}
We obtain the following instance-dependent regret that there exists $C''=C(\sigma,p,q)$ independent of $K$, $T$, and $\Delta_i$ such that
\begin{equation}
\mathrm{R}(T) \le C'' \sum_{\Delta_i>0} \Big( \Delta_i +   \frac{1}{\Delta_i} +\frac{1}{\Delta_i^{p-1}} + \frac{ \log (T\Delta_i^2)^{2/p}}{\Delta_i}\Big).
\end{equation}
The optimal choice of $\Delta = \sqrt{{K}/{T}} (\log K)^{1/p}$ gives the instance independent regret bound $\mathrm{R}(T) \le \mathcal{O}(\sqrt{KT} (\log K)^{1/p})$. 
\end{proof}

\subsection{Proof of Lemma \ref{lemma:corelemma}} \label{app:corelemma}
\begin{proof}
First of all, we will show the following inequality holds for all realizations $H_{t-1}$ of $\mathcal{H}_{t-1}$,
\begin{equation}\label{eq:p4}
 \mathrm{P}(A_t = i, E_i^{\theta}(t),E_i^{\mu}(t) | {H}_{t-1}) \le \frac{1-p_{i,t}}{p_{i,t}}\cdot \mathrm{P}(A_t = 1, E_i^{\theta}(t),E_i^{\mu}(t) | {H}_{t-1}).   
\end{equation}
To prove the above inequality, it suffices to show the following inequality in \eqref{eq:p3}. This is because whether $E_i^{\mu}(t)$ is true or not depends on realizations $H_{t-1}$ of history $\mathcal{H}_{t-1}$ and we would consider realizations $H_{t-1}$ where $E_i^{\mu}(t)$ is true. If it is not true in some $H_{t-1}$, then inequality in \eqref{eq:p4} trivially holds. 
\begin{equation}\label{eq:p3}
\mathrm{P}(A_t = i | E_i^{\theta}(t), {H}_{t-1}) \le \frac{1-p_{i,t}}{p_{i,t}} \cdot \mathrm{P}(A_t = 1 | E_i^{\theta}(t), {H}_{t-1})
\end{equation}
Considering realizations $H_{t-1}$ satisfying $E_i^{\theta}(t) = \{ \theta_i(t) \le y_i \}$, all $\theta_j(t)$ should be smaller than $y_i$ including optimal arm $1$ to choose a sub-optimal arm $i$.
\begin{align*}
\mathrm{P}(A_t = i | E_i^{\theta}(t), {H}_{t-1}) & \le \mathrm{P}(\theta_j(t) \le y_i, \forall j\in[K] |E_i^{\theta}(t), {H}_{t-1}) \\
& = \mathrm{P}(\theta_1(t) \le y_i | {H}_{t-1}) \cdot \mathrm{P}(\theta_j(t) \le y_i, \forall j\in[K]\setminus \{1,i\} |E_i^{\theta}(t), {H}_{t-1})\\ 
& = (1-p_{i,t}) \cdot \mathrm{P}(\theta_j(t) \le y_i, \forall j\in[K]\setminus \{1,i\} |E_i^{\theta}(t), {H}_{t-1}) \numberthis{}\label{eq:p1} 
\end{align*}
The first equality above holds since $\theta_1$ is independent of other $\theta_j, \forall{j \neq 1}$ and events $E_i^{\theta}(t)$ given $\mathcal{H}_{t-1}$. In the same way it is obtained as below,
\begin{align*}
\mathrm{P}(A_t = 1 | E_i^{\theta}(t), {H}_{t-1}) & \ge \mathrm{P}(\theta_1(t) > y_i \ge \theta_j(t), \forall j\in[K]\setminus \{1\} |E_i^{\theta}(t),{H}_{t-1}) \\
& = \mathrm{P}(\theta_1(t) \ge y_i | {H}_{t-1}) \cdot \mathrm{P}(\theta_j(t) \le y_i, \forall j\in[K]\setminus \{1,i\} |E_i^{\theta}(t), {H}_{t-1})\\
& = p_{i,t} \cdot \mathrm{P}(\theta_j(t) \le y_i, \forall j\in[K]\setminus \{1,i\} |E_i^{\theta}(t), {H}_{t-1})
\numberthis{}\label{eq:p2}
\end{align*}
Combining two inequalities \eqref{eq:p1} and \eqref{eq:p2}, inequality \eqref{eq:p3} is obtained. The rest of proof is as followed.
\begin{align*}
 \sum_{t=1}^T \mathrm{P}(A_t = i, E^{\mu}_i(t), E_i^{\theta}(t)) &\le  \sum_{t=1}^T \mathrm{E} [\mathrm{P}(A_t = i, E^{\mu}_i(t), E_i^{\theta}(t)|\mathcal{H}_{t-1})]\\
 & \le \sum_{t=1}^T \mathrm{E} \Big[ \frac{1-p_{i,t}}{p_{i,t}} \cdot \mathrm{P}(A_t = 1, E^{\mu}_i(t), E_i^{\theta}(t)|\mathcal{H}_{t-1})\Big] \\
 & \le \sum_{t=1}^T \mathrm{E}\Big[\mathrm{E} \Big[ \frac{1-p_{i,t}}{p_{i,t}} \cdot \mathrm{I}(A_t = 1, E^{\mu}_i(t), E_i^{\theta}(t))|\mathcal{H}_{t-1}\Big]\Big] \\
 & \le \sum_{t=1}^T \mathrm{E} \Big[ \frac{1-p_{i,t}}{p_{i,t}} \cdot \mathrm{I}(A_t = 1, E^{\mu}_i(t), E_i^{\theta}(t))\Big] \\
  &\le \sum_{j=0}^{T-1}\mathrm{E} \Big[ \frac{1-p_{i,\delta_j+1}}{p_{i,\delta_j+1}}  \sum_{t=\delta_j+1}^{\delta_{j+1}} \mathrm{I}(A_t = 1, E^{\mu}_i(t), E_i^{\theta}(t))\Big] \\
   &\le \sum_{j=0}^{T-1}\mathrm{E} \Big[ \frac{1-p_{i,\delta_j+1}}{p_{i,\delta_j+1}}\Big] 
\end{align*}
\end{proof}

\subsection{Proof of Theorem \ref{thm:bdd}} \label{proofbdd}
\begin{proof}
For each arm $i \neq 1$, we will choose two thresholds $x_i = \mu_i + \frac{\Delta_i}{3}$, $y_i = \mu_1 - \frac{\Delta_i}{3}$ such that $\mu_i < x_i < y_i < \mu_1$ and define three types of events, $E_i^{\mu}(t) = \{ \hat{\mu}_i(t) \le x_i \}$, $E_i^{\theta}(t) = \{ \theta_i(t) \le y_i \}$, and 
$E_{1,i}^{\mu}(t) = \{\mu_1 - \frac{\Delta_i}{6} - \sqrt{\frac{2 \log T}{T_1(t)}} \le \hat{\mu}_1(t)\}$. The last event is to control the behavior of $\hat{\mu}_1(t)$ not too far below the mean $\mu_1$. $\mathrm{E}[ T_i(T)] =  \sum_{t=1}^T \mathrm{P}(A_t = i)$ is decomposed into the following four parts according to events $E_i^{\mu}(t)$, $E_i^{\theta}(t)$, and $E_{1,i}^{\mu}(t)$,
\begin{align*}
\mathrm{E}[T_i(T)] = & \underbrace{\sum_{t=1}^T \mathrm{P}(A_t = i, (E^{\mu}_i(t))^c)}_{(a)} + \underbrace{\sum_{t=1}^T \mathrm{P}(A_t = i, E^{\mu}_i(t), (E_i^{\theta}(t))^c)}_{(b)} \\
 & + \underbrace{\sum_{t=1}^T \mathrm{P}(A_t = i, E^{\mu}_i(t), E_i^{\theta}(t), (E^{\mu}_{1,i}(t))^c)}_{(c)} +  \underbrace{\sum_{t=1}^T \mathrm{P}(A_t = i, E^{\mu}_i(t), E_i^{\theta}(t), E^{\mu}_{1,i}(t))}_{(d)}.
\end{align*}
Let $\tau_k$ denote the time at which $k$-th trial of arm $i$ happens. Set $\tau_0 =0$.
\begin{align*}
(a) &\le \mathrm{E}[\sum_{k=0}^{T-1} \sum_{t=\tau_k +1}^{\tau_{k+1}} \mathrm{I}( A_t = i)\mathrm{I}((E^{\mu}_i(t))^c)] 
      \le \mathrm{E}[\sum_{k=0}^{T-1}\mathrm{I}((E^{\mu}_i(\tau_k +1))^c) \sum_{t=\tau_k +1}^{\tau_{k+1}} \mathrm{I}( A_t = i)] \\
      &\le 1 + \sum_{k=1}^{T-1} \mathrm{P}((E^{\mu}_i(\tau_k +1))^c) \le 1 + \sum_{k=1}^{T-1} \exp( - \frac{k(x_i -\mu_i)^2}{2} ) 
      \le 1 + \frac{18}{\Delta_i^2}.
\end{align*}
The second last inequality above holds by Hoeffding bound of sample mean of $k$ sub-Gaussian rewards, $\hat{\mu}_i(t)$ in Lemma \ref{lemma:hoeff}. The probability in part $(b)$ is upper bounded by 1 if $T_i(t)$ is less than $L_i(T) =  \frac{9(2+\epsilon) \log T}{\Delta_i^2}$ and is equal to 0, otherwise. The latter can be proved as below, 
\begin{align*}
\mathrm{P}(A_t = i, (E_i^{\theta}(t))^c | E^{\mu}_i(t) )   &\le \mathrm{P}(\theta_i(t) > y_i | \hat{\mu}_i(t) \le x_i )\\
&\le \mathrm{P}\Big(Z_{it} > \sqrt{\frac{T_i(t)(y_i-x_i)^2}{(2+\epsilon) \log T}}  | \hat{\mu}_i(t) \le x_i \Big) = 0 \quad \text{if} \quad T_i(t) \ge L_i(T). 
\end{align*}
The last equality holds by bounded support of perturbation $Z_{it}$. Let $\tau$ be the largest step until $T_i(t) \le L_i(T)$, then part $(b)$ is bounded by $L_i(T)$.
Regarding part $(c)$,
\begin{align*}
(c) 	& = \sum_{t=1}^T \mathrm{P}(A_t = i, E^{\mu}_i(t), E_i^{\theta}(t), (E^{\mu}_{1,i}(t))^c) \\
	& \le \sum_{t=1}^T \mathrm{P}((E^{\mu}_{1,i}(t))^c) = \sum_{t=1}^T \sum_{s=1}^T \mathrm{P}\Big( \mu_1 -\frac{\Delta_i}{6} - \sqrt{\frac{2 \log T}{s}} \ge \hat{\mu}_{1,s}\Big) \\
& = \sum_{t=1}^T \sum_{s=1}^T \mathrm{P}\Big( \mu_1 -\frac{\Delta_i}{6}  \ge \hat{\mu}_{1,s} + \sqrt{\frac{2 \log T}{s}}\Big) = \sum_{t=1}^T \frac{1}{T} \sum_{s=1}^T \exp \big( -\frac{s \Delta_i^2}{72} \big)  \le \frac{72}{\Delta_i^2}
\end{align*}
Define $p_{i,t}$ as the probability $p_{i,t} = \mathrm{P}(\theta_1(t) > y_i | \mathcal{H}_{t-1})$ where $\mathcal{H}_{t-1}$ is defined as the history of plays until time $t-1$. Let $\delta_j$ denote the time at which $j$-th trial of arm $1$ happens. 
In the history where the event $E^{\mu}_{1,i}(t)$ holds, then $\mathrm{P}(\theta_1(t) > y_i | \mathcal{H}_{t-1})$ is strictly greater than zero because of wide enough support of scaled perturbation by adding an extra logarithmic term in $T$. 
For $i\neq 1$,
\begin{align*}
(d) & = \sum_{t=1}^T \mathrm{P}(A_t = i, E^{\mu}_i(t), E_i^{\theta}(t), E^{\mu}_{1,i}(t)) 
 \le  \sum_{j=0}^{T-1}\mathrm{E} \Big[ \frac{1-p_{i,\delta_j+1}}{p_{i,\delta_j+1}} \mathrm{I}(E^{\mu}_{1,i}(\delta_j+1))\Big] \\
& = \sum_{j=0}^{T-1} \mathrm{E} \Big[  \frac{1-\mathrm{P}\big(
\hat{\mu}_{1,j}+ \sqrt{\frac{(2+\epsilon) \log T}{j}} Z \ge \mu_1 - \frac{\Delta_i}{3}\big)}{\mathrm{P}\big(\hat{\mu}_{1,j} + \sqrt{\frac{(2+\epsilon) \log T}{j}} Z \ge \mu_1 - \frac{\Delta_i}{3}\big)} \mathrm{I} \Big(\hat{\mu}_{1,j} \ge \mu_1 -\frac{\Delta_i}{6} -\sqrt{\frac{2\log T}{j}}\Big) \Big] \\ 
& = \sum_{j=0}^{T-1} \frac{\mathrm{P}\big(Z \le \sqrt{\frac{2}{2+\epsilon}} - \frac{\sqrt{j} \Delta_i}{6\sqrt{(2+\epsilon)\log T}}\big)}{\mathrm{P}\big(Z \ge \sqrt{\frac{2}{2+\epsilon}} - \frac{\sqrt{j} \Delta_i}{6\sqrt{(2+\epsilon)\log T}}\big) } \quad \because \text{maximized when }\hat{\mu}_{1,j}= \mu_1 -\frac{\Delta_i}{6} -\sqrt{\frac{2\log T}{j}}\\
& = \sum_{j=0}^{M_i(T)} \frac{\mathrm{P}\big(Z \le \sqrt{\frac{2}{2+\epsilon}} - \frac{\sqrt{j} \Delta_i}{6\sqrt{(2+\epsilon)\log T}}\big)}{\mathrm{P}\big(Z \ge \sqrt{\frac{2}{2+\epsilon}} - \frac{\sqrt{j} \Delta_i}{6\sqrt{(2+\epsilon)\log T}}\big) }\\
& \le M_i(T) \cdot \frac{\mathrm{P}\big(Z \le \sqrt{\frac{2}{2+\epsilon}}\big)}{\mathrm{P}\big(Z \ge \sqrt{\frac{2}{2+\epsilon}}\big)}
= M_i(T) \cdot C_{Z,\epsilon}\quad \because \text{maximized when }j =0
\end{align*}
The first inequality holds by Lemma \ref{lemma:corelemma}, and the last equality works since the term inside expectation becomes zero if $j \ge M_i(T)= \big(36(\sqrt{2}+\sqrt{(2+\epsilon)})^2 \log T \big)/{\Delta_i^2}$. This is because the lower bound of perturbed average rewards from the arm $1$ becomes larger than $y_i$ for $j \ge M_i(T)$.
Combining parts $(a)$, $(b)$, $(c)$ and $(d)$,  
$$
\mathrm{E} [T_i(T)]  \le 1 + \frac{90}{\Delta_i^2} +  \frac{9(2+\epsilon) \log T}{\Delta_i^2} + C_{Z,\epsilon} \cdot \frac{36(\sqrt{2}+\sqrt{(2+\epsilon)})^2 \log T}{\Delta_i^2}
$$
Thus, the instance-dependent regret bound is obtained as below, there exist a universal constant $C''' > 0$ independent of $T, K$ and $\Delta_i$,
$$
\mathrm{R}(T)  = C'''  \sum_{\Delta_i>0}\Big(\Delta_i  +  \frac{\log(T)}{\Delta_i} \Big).
$$
The optimal choice of $\Delta = \sqrt{K\log T/T}$, the instance-independent regret bound is derived as it follows,
\begin{align*}
\mathrm{R}(T)  & \le \mathcal{O}(\sqrt{KT\log T})
\end{align*}
\end{proof}

\subsection{Proof of Theorem \ref{thm:lower}} \label{app:lower}
\begin{proof}
The proof is a simple extension of the work of \citet{agrawal2013further}. Let $\mu_1 = \Delta = \sqrt{{K}/{T}} (\log K)^{1/q}, \mu_2 = \mu_3 = \cdots = \mu_K = 0$ and each reward is generated from a point distribution. Then, sample means of rewards are $\hat{\mu}_1 (t) = \Delta$ and $\hat{\mu}_i (t) = 0$ if $i\neq 1$. The normalized $\theta_i (t)$ sampled from the FTPL algorithm is distributed as $\sqrt{T_i(t)} \cdot (\theta_i (t) -\hat{\mu}_i(t)) \sim Z$. 

Define the event $E_{t-1} = \{\sum_{i \neq 1} T_i(t) \le c \sqrt{KT} (\log K)^{1/q}/{\Delta} \}$ for a fixed constant $c$. If $E_{t-1}$ is not true, then the regret until time $t$ is at least $c \sqrt{KT} (\log K)^{1/q}$. For any $t \le T$, $\mathrm{P}(E_{t-1}) \le 1/2$. Otherwise, the expected regret until time $t$, $\mathrm{E}[R(t)] \ge \mathrm{E}[R(t) | E_{t-1}^c] \cdot 1/2 = \Omega (\sqrt{KT} (\log K)^{1/q})$. If $E_{t-1}$ is true, the probability of playing a suboptimal is at least a constant, so that regret is $\Omega (T\Delta) =  \Omega (\sqrt{KT}(\log K)^{1/q})$.
\begin{align*}
\mathrm{P}( \exists i \neq 1, \theta_i(t) > \mu_1 | \mathcal{H}_{t-1}) & = \mathrm{P}( \exists i \neq 1, \theta_i(t) \sqrt{T_i(t)} > \Delta \sqrt{T_i(t)} | \mathcal{H}_{t-1}) \\
& =  \mathrm{P}( \exists i \neq 1, Z > \Delta \sqrt{T_i(t)} | \mathcal{H}_{t-1}) \\
& \ge  1- \prod_{i \neq 1} \Big( 1- \exp \big( -  (\sqrt{T_i(t)}\Delta/\sigma)^{q}/2 \big)/C_b \Big)
\end{align*}
Given realization ${H}_{t-1}$ of history $\mathcal{H}_{t-1}$ such that $E_{t-1}$ is true, we have $\sum_{i \neq 1} T_i(t) \le \frac{c \sqrt{KT} (\log K)^{1/q}}{\Delta}$ and it is minimized when $T_i(t) = \frac{c \sqrt{KT }(\log K)^{1/q}}{(K-1)\Delta}$ for all $i \neq 1$. Then, 
\begin{align*}
\mathrm{P}( \exists i \neq 1, \theta_i(t) > \mu_1 | {H}_{t-1}) &\ge  1- \prod_{i \neq 1} \Big( 1- \exp \big( -  \frac{\big(\sqrt{T_i(t)}\Delta\big)^q}{2\sigma^q}\big)/C_b \Big)\\
&= 1-  \Big( 1- \frac{\sigma(q,K)}{K}  \Big)^{K-1}
\end{align*}
where $\sigma(q,K) = \exp \big(\frac{c^{q/2}}{2\nu^q}{ (\frac{K}{K-1})^{q/2}}\big)/C_b$. Accordingly, 
\begin{align*}
\mathrm{P}( \exists i \neq 1, A_t =i) \ge \frac{1}{2} \Big( 1-  \big( 1- \frac{\sigma(q,K)}{K}  \big)^{K-1} \Big) \cdot \frac{1}{2} \rightarrow p^{\star} \in (0,1).
\end{align*}
Therefore, the regret in time $T$ is at least $Tp^{\star}\Delta = \Omega (\sqrt{KT} (\log K)^{1/q})$.
\end{proof}

\subsection{Proof of Theorem \ref{thm:tsallisfail}} \label{app:tsallisfial}
\begin{proof}
Fix $\eta=1$ without loss of generality in FTRL algorithm via Tsallis entropy. For any $\alpha \in (0,1)$, Tsallis entropy yields the following choice probability,
$C_i(\mathbf{G}) = \big(\frac{1-\alpha}{\alpha}\big)^{\frac{1}{\alpha-1}} (\lambda(\mathbf{G}) - G_i)^{\frac{1}{\alpha-1}}$, where $\sum_{i=1}^K C_i(\mathbf{G}) = 1, \; \lambda(\mathbf{G}) \ge \max_{i \in [K]} \; G_i$. Then for $1\le i \neq j \le K$, the first derivative is negative as shown below,
$$
\frac{\partial C_i(\mathbf{G})}{\partial G_j}  = \big(\frac{1-\alpha}{\alpha}\big)^{\frac{1}{\alpha-1}} \frac{1}{\alpha-1} \frac{ (\lambda(\mathbf{G}) - G_i)^{\frac{1}{\alpha-1}-1}(\lambda(\mathbf{G}) - G_j)^{\frac{1}{\alpha-1}-1}}{\sum_{l=1}^K (\lambda(\mathbf{G}) - G_l)^{\frac{1}{\alpha-1}-1}} < 0.
$$
and it implies that $\mathcal{D}\mathbf{C}(\mathbf{G})$ is symmetric. For, $1\le i \neq j\neq k \le K$, the second partial derivative, $\frac{\partial^2 C_i(\mathbf{G})}{\partial G_j \partial G_k}$ is derived as
\begin{equation}\label{eq:second}
C_i(\mathbf{G}) \cdot\bigg( (\sum_{l=1}^K (\lambda(\mathbf{G}) - G_l)^{\frac{2-\alpha}{\alpha-1}})(\frac{1}{\lambda(\mathbf{G}) - G_i} + \frac{1}{\lambda(\mathbf{G}) - G_j} + \frac{1}{\lambda(\mathbf{G}) - G_k}) - \sum_{l=1}^K (\lambda(\mathbf{G}) - G_l)^{\frac{3-2\alpha}{\alpha-1}} \bigg).
\end{equation}
If we set $1/(\lambda(\mathbf{G}) - G_i) = X_i$, the term in \eqref{eq:second} except for the term $C_i(\mathbf{G})$ is simplified to $\sum_{i=1}^K X_i^{\frac{2-\alpha}{1-\alpha}} \cdot (X_1 + X_2 + X_3) - \sum_{i=1}^K X_i^{\frac{3-2\alpha}{1-\alpha}}$ where $\sum_{i=1}^K X_i^{\frac{1}{1-\alpha}} = \big(\frac{\alpha}{1-\alpha}\big)^{\frac{1}{\alpha-1}}$.
If we set $K=4$, $C(\mathbf{G}) = (\epsilon,\epsilon, \epsilon, 1-3\epsilon)$, and $X_i = C_i^{1-\alpha}\frac{1-\alpha}{\alpha}$, then it is equal to  
\begin{equation}\label{eq:counter}
\big( \frac{1-\alpha}{\alpha} \big)^{\frac{3-2\alpha}{1-\alpha}} \big( 6\epsilon^{3-2\alpha} + 3 \epsilon^{1-\alpha}(1-3\epsilon)^{2-\alpha} - (1-3\epsilon)^{3-2\alpha} \big).
\end{equation}
So, there always exists $\epsilon>0$ small enough to make the value of \eqref{eq:counter} negative where $\alpha \in (0,1)$, which means condition (4) in Theorem \ref{thm:wdz} is violated.
\end{proof}

\section{Extreme Value Theory} \label{app:evt}
\subsection{Extreme Value Theorem}
\begin{theorem}[Proposition 0.3 \citep{resnick2013extreme}] \label{thm:evt}
Suppose that there exist $\left\lbrace a_K >0 \right\rbrace$ and $\left\lbrace b_K \right\rbrace$ such that
\begin{equation}
\mathrm{P}({(M_K - b_K)}/{a_K} \le z) = F^K(a_K \cdot z +b_K) \rightarrow G(z)
\quad \text{as } K \rightarrow \infty
\end{equation}
where $G$ is a non-degenerate distribution function, then $G$ belongs to one of families; Gumbel, Fr{\'e}chet and Weibull. Then, $F$ is in the domain of attraction of $G$, written as $F \in D(G)$. \\
1. Gumbel type ($\Gamma$) with $G(x) = \exp(-\exp(-x))$ for $x \in \mathbb{R}$. \\
2. Fr{\'e}chet type ($\Phi_{\alpha}$) with  $G(x) = 0$ for $x >0$ and $G(x) =\exp(-x^{-\alpha})$ for $x \ge 0$. \\
3. Weibull type ($\Psi_{\alpha}$) with $G(x) =\exp(-(-x)^{\alpha})$ for $x\le 0$ and $G(x) =1$ for $x>0$.
\end{theorem}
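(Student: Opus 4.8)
The plan is to carry out the classical argument behind the extremal types theorem (Fisher--Tippett--Gnedenko), whose engine is the classification of \emph{max-stable} distribution functions. Say a non-degenerate $G$ is max-stable if for every integer $n\ge 1$ there exist $\alpha_n>0$ and $\beta_n\in\mathbb{R}$ with $G^n(\alpha_n z+\beta_n)=G(z)$ for all $z$. I would prove two things: (i) any non-degenerate limit $G$ of $F^K(a_K z+b_K)$ is max-stable; and (ii) the only max-stable distribution functions are, up to an affine change of variables, the Gumbel, Fr\'echet and Weibull laws listed in the statement. The final assertion $F\in D(G)$ is then just the definition of domain of attraction applied to the hypothesis.

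Step (i) rests on Khinchin's convergence-of-types lemma: if $F_n(a_n z+b_n)\to G(z)$ and $F_n(\alpha_n z+\beta_n)\to H(z)$ at continuity points, with $G,H$ both non-degenerate, then $\alpha_n/a_n\to A\in(0,\infty)$ and $(\beta_n-b_n)/a_n\to B\in\mathbb{R}$, and $H(z)=G(Az+B)$. I would prove this lemma by working with the generalized inverses $F_n^{\leftarrow}$ (and P\'olya's theorem, the limit being continuous), and rule out the normalizing constants escaping to $0$, $\infty$ or $\pm\infty$ by choosing two continuity points of $G$ with distinct values. Granting the lemma, fix $k\in\mathbb{N}$ and apply it to $F^{Kk}$ indexed by $K$: on the one hand $F^{Kk}(a_{Kk}z+b_{Kk})\to G(z)$, and on the other $F^{Kk}(a_Kz+b_K)=\big(F^{K}(a_Kz+b_K)\big)^k\to G^k(z)$, which is non-degenerate whenever $G$ is. The lemma then produces $\alpha_k>0,\beta_k$ with $G^k(z)=G(\alpha_k z+\beta_k)$, i.e.\ $G$ is max-stable.

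For step (ii), set $U(z)=-\log G(z)\in[0,\infty]$, so max-stability reads $n\,U(\alpha_n z+\beta_n)=U(z)$ for all $n\ge 1$. Composing the relations for $n$ and $m$ and using strict monotonicity of $U$ on the interior of the support yields $\alpha_{nm}=\alpha_n\alpha_m$ and $\beta_{nm}=\alpha_m\beta_n+\beta_m$; extending to real indices via the continuous-parameter form of the convergence and using monotonicity of $G$, one obtains $\alpha_s=s^{\gamma}$ for $s>0$ and a single $\gamma\in\mathbb{R}$. The trichotomy is then exactly $\gamma=0$, $\gamma>0$, $\gamma<0$. If $\gamma=0$ then $\alpha_n\equiv 1$, $\beta_{nm}=\beta_n+\beta_m$, so $\beta_n=c\log n$ with $c\ne 0$ (else $G$ degenerates); solving $n\,U(z+c\log n)=U(z)$ gives $U(z)=Ae^{-z/c}$ with $c>0$, the Gumbel law. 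If $\gamma\ne 0$ the affine maps $z\mapsto\alpha_n z+\beta_n$ share a common fixed point $z_0$, which is forced to be an endpoint of the support (since $n\,U(z_0)=U(z_0)$ makes $U(z_0)\in\{0,\infty\}$); recentering at $z_0$ reduces the equation to $n\,U(\alpha_n z)=U(z)$, whose monotone solutions are power laws, and one reads off Fr\'echet $\Phi_{1/\gamma}$ (infinite upper endpoint) when $\gamma>0$ and Weibull $\Psi_{-1/\gamma}$ (finite upper endpoint $z_0$) when $\gamma<0$.

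I expect the main obstacle to be the regularity issue in step (ii): the functional equations are supplied only for integer $n$, with no a priori continuity or measurability, so the argument must lean entirely on the monotonicity of $G$ to (a) exclude pathological additive/multiplicative solutions of $\alpha_{nm}=\alpha_n\alpha_m$ and $\beta_{nm}=\beta_n+\beta_m$, and (b) pass from integer $n$ to a genuine power $\alpha_s=s^{\gamma}$ so that the limiting equation can be solved in closed form. A second, more bookkeeping-heavy hurdle is the careful handling of generalized inverses and of finite-versus-infinite support endpoints inside the convergence-of-types lemma.
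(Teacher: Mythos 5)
This theorem is imported by citation (Proposition 0.3 of Resnick) and the paper supplies no proof of its own, so there is nothing internal to compare against; your outline is the standard Fisher--Tippett--Gnedenko argument --- max-stability of the limit via Khinchin's convergence-of-types lemma, followed by classification of max-stable laws through the Hamel equations for the norming constants --- which is precisely the proof in the cited source. The sketch is correct, and you rightly flag the only delicate points (regularity of $s\mapsto\alpha_s$ from monotonicity alone, and the bookkeeping around support endpoints in the convergence-of-types step).
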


Let $\gamma_K = F^{\leftarrow}(1-1/K) = \inf \left\lbrace x: F(x) \ge 1- 1/K \right\rbrace$.
\begin{theorem} [Proposition 1.1 \citep{resnick2013extreme}] Type 1 - Gumbel ($\Lambda$) \label{thm:gumbel}\\
1. If $F \in D(\Gamma)$, there exists some strictly positive function $g(t)$ s.t. $\lim_{t\rightarrow \omega_F} \frac{1-F(t+x \cdot g(t))}{1-F(t)} = \exp(-x)$ for all $x \in \mathbb{R}$ with exponential tail decay. Its corresponding normalizing sequences are $a_K = g(\gamma_K)$ and $b_K = \gamma_K$, where $g = (1-F)/{F'}$.\\
2. If \;$\lim_{x\rightarrow \infty} \frac{F''(x)(1-F(x))}{\left\lbrace F'(x) \right\rbrace ^2} = -1$, then $F \in D(\Lambda)$.\\
3. If \;$\int_{-\infty}^0 |x| F(dx) < \infty$, then $\lim_{K \rightarrow \infty} \mathrm{E} \big( ({M_K-b_K})/{a_K} \big) = - \Gamma^{(1)}(1)$. Accordingly, $\mathrm{E} M_K$ behaves as $- \Gamma^{(1)}(1) \cdot g(\gamma_K) + \gamma_K$.
\end{theorem}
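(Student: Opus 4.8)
These three facts are classical results of extreme value theory, so my plan is to assemble them from the theory of $\Gamma$-variation and the von Mises conditions, citing \citet{resnick2013extreme} for the general machinery and supplying only the short specializations needed here. For part 1, I would invoke the von Mises tail representation: membership $F\in D(\Lambda)$ is equivalent to $1-F(x)=c(x)\exp\big(-\int_{z_0}^{x}dt/g(t)\big)$ near $\omega_F$, with $c(x)\to c>0$ and $g$ absolutely continuous with $g'(x)\to 0$. From this the ratio $\big(1-F(t+x\,g(t))\big)/\big(1-F(t)\big)\to e^{-x}$ follows by the substitution $s=t+u\,g(t)$ together with $g(t+u\,g(t))/g(t)\to 1$. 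To pin down the normalizing sequences I would take $b_K=\gamma_K=F^{\leftarrow}(1-1/K)$, so that $K\big(1-F(b_K)\big)\to 1$, and $a_K=g(b_K)$; then $K\big(1-F(a_K z+b_K)\big)=K\big(1-F(b_K)\big)\cdot\big(1-F(b_K+z\,g(b_K))\big)/\big(1-F(b_K)\big)\to e^{-z}$, whence $F^K(a_K z+b_K)\to\exp(-e^{-z})$. When $F$ has a positive density near $\omega_F$ one checks that $g=(1-F)/F'$ is itself a von Mises auxiliary function, which is the form claimed.

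For part 2, the plan is the one-line verification that $g=(1-F)/F'$ satisfies the von Mises condition:
\[
g'(x)=\frac{-(F'(x))^2-(1-F(x))F''(x)}{(F'(x))^2}=-1-\frac{(1-F(x))F''(x)}{(F'(x))^2}\longrightarrow -1+1=0,
\]
using the hypothesis $F''(x)(1-F(x))/(F'(x))^2\to -1$. Since $g'\to 0$, $g$ is a von Mises auxiliary function and the von Mises sufficiency theorem gives $F\in D(\Lambda)$.

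For part 3, parts 1--2 already give $(M_K-b_K)/a_K\Rightarrow Z$ with $Z\sim\Lambda$, so the remaining work is to promote weak convergence to convergence of means, i.e.\ to establish uniform integrability of $\{(M_K-b_K)/a_K\}_K$. For the positive part I would use $\mathrm{P}\big((M_K-b_K)/a_K>z\big)\le K\big(1-F(a_K z+b_K)\big)$, which is bounded by a multiple of $e^{-z}$ uniformly in large $K$. For the negative part I would use $\mathrm{P}\big((M_K-b_K)/a_K<-z\big)=F^K(b_K-a_K z)\le\exp\big(-K(1-F(b_K-a_K z))\big)$ together with the moment assumption $\int_{-\infty}^{0}|x|\,F(dx)<\infty$, which is exactly what limits how much mass $F$ can place far out in the left tail and hence controls $\mathrm{E}\big[\big((M_K-b_K)/a_K\big)^{-}\big]$ uniformly in $K$. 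Uniform integrability then yields $\mathrm{E}[(M_K-b_K)/a_K]\to\mathrm{E}[Z]=\int z\,d\Lambda(z)=\gamma=-\Gamma^{(1)}(1)$, and rearranging $\mathrm{E}[M_K]=a_K\,\mathrm{E}[(M_K-b_K)/a_K]+b_K$ with $a_K=g(\gamma_K)$, $b_K=\gamma_K$ gives the stated asymptotic $\mathrm{E}[M_K]=-\Gamma^{(1)}(1)\,g(\gamma_K)+\gamma_K$.

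The step I expect to be the main obstacle is the uniform integrability estimate in part 3 --- turning the one-sided moment hypothesis $\int_{-\infty}^{0}|x|\,F(dx)<\infty$ into a genuinely uniform-in-$K$ bound on the negative-part contribution to $\mathrm{E}[(M_K-b_K)/a_K]$, since near the lower endpoint the behavior of $M_K$ is governed by $F^K$ in a regime where the von Mises machinery of parts 1--2 says nothing. The characterization in part 1 is also substantial, but it is entirely standard and can simply be quoted.
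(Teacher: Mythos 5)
The paper does not prove this statement at all: it is imported verbatim as Proposition~1.1 of \citet{resnick2013extreme} (together with Proposition~2.1 there for the moment convergence in part~3), so there is no in-paper argument to compare against. Your outline is, in substance, the standard textbook proof of exactly those propositions: the Balkema--de Haan/von Mises representation and $\Gamma$-variation for part~1, the computation $g' = -1 - (1-F)F''/(F')^2 \to 0$ for part~2, and weak convergence plus uniform integrability for part~3. Two places deserve more care than your sketch gives them. First, the choice $g=(1-F)/F'$ presupposes that $F$ has a positive density near $\omega_F$; the general characterization of $D(\Lambda)$ only guarantees \emph{some} auxiliary function, and identifying it with $(1-F)/F'$ is exactly the von Mises special case, which is all the paper needs (every distribution in Table~1 is absolutely continuous) but should be stated as a hypothesis. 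Second, the positive-tail uniform integrability bound ``$K(1-F(a_Kz+b_K))\le C e^{-z}$ uniformly in large $K$'' is not automatic from pointwise $\Gamma$-variation; you need Potter-type uniform bounds from the tail representation, which typically yield $Ce^{-(1-\epsilon)z}$ --- still sufficient for uniform integrability, but the uniformity in $z$ over the whole half-line is where the work is, at least as much as in the negative tail you flag. Since the paper treats the result as a black box, the honest options are either to cite Resnick's Propositions 1.1 and 2.1 as the paper does, or to carry out the uniform-integrability estimates in full; your proposal correctly identifies what those estimates must accomplish but does not yet supply them.
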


\begin{theorem} [Proposition 1.11 \citep{resnick2013extreme}] Type 2 - Fr{\'e}chet ($\Phi_{\alpha}$) \label{thm:frechet}\\
1. If $F \in D(\Phi_{\alpha})$, its upper end point is infinite, $\omega_F =\infty$, and it has tail behavior that decays polynomially $\lim_{t\rightarrow \infty}  \frac{1-F(tx)}{1-F(t)} = x^{-\alpha}$, for $x>0, \alpha >0$.
Its corresponding normalizing sequences are $a_K = \gamma_K$ and $b_K = 0$.\\
2. If \;$\lim_{x\rightarrow \infty} \frac{xF'(x)}{1- F(x)} = \alpha$ for some $\alpha>0$, then $F \in D(\Phi_{\alpha})$.\\
3. If \;$\alpha>1$ and $\int_{-\infty}^0 |x| F(dx) < \infty$, then $\lim_{K \rightarrow \infty} \mathrm{E} \big( M_K/{a_K} \big) = \Gamma\big( 1-1/{\alpha} \big)$. Accordingly, $\mathrm{E} M_K$ behaves as $\Gamma\big( 1-\frac{1}{\alpha} \big) \cdot \gamma_K$.
\end{theorem}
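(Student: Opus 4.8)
The plan is to establish the three parts in order, each resting on the theory of regularly varying functions; throughout write $\bar{F} = 1 - F$ and $\Phi_\alpha(z) = \exp(-z^{-\alpha})$ for $z > 0$. For \emph{Part 1} I would start from the defining convergence $F^K(a_K z + b_K) \to \Phi_\alpha(z)$ and first reduce the centering to $b_K = 0$: since $\Phi_\alpha$ is supported on $(0,\infty)$ with right endpoint $\infty$, standard arguments via the convergence-to-types theorem give $a_K \to \infty$ and $b_K = o(a_K)$, which forces $\omega_F = \infty$ (a finite right endpoint would make $F^K$ eventually identically $1$ near it, contradicting non-degeneracy) and lets one replace $b_K$ by $0$. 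Taking logarithms and using $-\log F \sim \bar{F}$ as $F \uparrow 1$ turns the convergence into $K\,\bar{F}(a_K z) \to z^{-\alpha}$ for every $z > 0$. Setting $z = 1$ gives $K\,\bar{F}(a_K) \to 1$, hence $\bar{F}(a_K) \sim 1/K$, and comparing with $\gamma_K = F^{\leftarrow}(1 - 1/K)$ yields $a_K \sim \gamma_K$. Dividing the two limit relations gives $\bar{F}(a_K z)/\bar{F}(a_K) \to z^{-\alpha}$, and a standard Khinchin-type monotonicity-and-interpolation argument upgrades this along the discrete sequence $\{a_K\}$ to $\lim_{t\to\infty}\bar{F}(tx)/\bar{F}(t) = x^{-\alpha}$ for all $x > 0$, i.e., $\bar{F}$ is regularly varying of index $-\alpha$.

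For \emph{Part 2} I would integrate the von Mises condition. Since $xF'(x)/\bar{F}(x) = -x\,\tfrac{d}{dx}\log\bar{F}(x) \to \alpha$, integrating from a fixed $x_0$ gives $\log\bar{F}(x) = \log\bar{F}(x_0) - \int_{x_0}^x \tfrac{\alpha + \epsilon(t)}{t}\,dt$ with $\epsilon(t) \to 0$, whence $\bar{F}(x) = x^{-\alpha} L(x)$ with $L$ slowly varying. Regular variation of $\bar{F}$ of index $-\alpha$ is precisely the condition putting $F$ in $D(\Phi_\alpha)$ with $a_K = \gamma_K$ and $b_K = 0$: this is the converse of Part 1, obtained by running the Part 1 computation backwards, taking $a_K = \gamma_K$ so that $K\bar{F}(a_K) \to 1$ and then $K\bar{F}(a_K z) = K\bar{F}(a_K)\cdot\tfrac{\bar{F}(a_K z)}{\bar{F}(a_K)} \to z^{-\alpha}$.

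For \emph{Part 3} I would promote the weak convergence $M_K/a_K \Rightarrow Z \sim \Phi_\alpha$ (from Part 1) to convergence of first moments via uniform integrability, treating positive and negative parts separately. For the positive part, $\mathrm{E}[(M_K/a_K)^{+}] = \int_0^\infty \mathrm{P}(M_K > a_K z)\,dz \le \int_0^\infty \min\{1,\,K\bar{F}(a_K z)\}\,dz$; Potter's bounds for the regularly varying $\bar{F}$ give $K\bar{F}(a_K z) \le (1+\epsilon)\,z^{-\alpha+\delta}$ for $z \ge 1$ and $K$ large, and choosing $\delta$ small enough that $\alpha - \delta > 1$ (possible since $\alpha > 1$) produces an integrable dominating function, so dominated convergence yields $\mathrm{E}[(M_K/a_K)^{+}] \to \mathrm{E}[Z] = \int_0^\infty (1 - e^{-z^{-\alpha}})\,dz = \Gamma(1 - 1/\alpha)$, the last identity by the substitution $u = z^{-\alpha}$ followed by an integration by parts. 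For the negative part, $M_K^{-} \le Z_1^{-}$ pointwise (if all $Z_i < 0$ then $-M_K = \min_i(-Z_i) \le -Z_1$), so $\mathrm{E}[(M_K/a_K)^{-}] \le \mathrm{E}[Z_1^{-}]/a_K = \big(\int_{-\infty}^0 |x|\,F(dx)\big)/a_K \to 0$ because $a_K \to \infty$. Adding the two pieces gives $\mathrm{E}[M_K/a_K] \to \Gamma(1 - 1/\alpha)$, and since $a_K \sim \gamma_K$ we conclude $\mathrm{E}[M_K] \sim \Gamma(1 - 1/\alpha)\,\gamma_K$.

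The main obstacle is the uniform-integrability step in Part 3: one needs a \emph{quantitative} form of regular variation (the Potter bound), not merely the pointwise limit, and must be careful that its constants can be chosen uniformly over the range of arguments $a_K z$ as $a_K \to \infty$ and $K \to \infty$ simultaneously, so that the tail integral is genuinely dominated. The other technically delicate point is the passage in Part 1 from convergence along the discrete sequence $\{a_K\}$ to honest regular variation of $\bar{F}$ on the continuum, together with the reduction $b_K \equiv 0$ via the convergence-to-types theorem; both are classical but require care with monotonicity and with the endpoints of $F$.
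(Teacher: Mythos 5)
The paper does not prove this statement itself—it imports it verbatim from Resnick (Proposition 1.11 and the accompanying moment-convergence results), whose proof proceeds exactly as you do: characterize $D(\Phi_\alpha)$ by regular variation of $1-F$ with index $-\alpha$, derive that from the von Mises condition via the Karamata representation, and upgrade weak convergence of $M_K/a_K$ to convergence of means by uniform integrability (Potter bounds for the positive part, $M_K^- \le Z_1^-$ plus $a_K \to \infty$ for the negative part). Your reconstruction is correct, including the computation $\int_0^\infty (1-e^{-z^{-\alpha}})\,dz = \Gamma(1-1/\alpha)$ and the identification of the genuinely delicate steps, so there is nothing to add.
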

\subsection{Asymptotic Expected Block Maxima and Supremum of Hazard Rate}\label{app:asym}
\subsubsection{Gumbel distribution}
Gumbel has the following distribution function, the first derivative and the second derivative, $F(x) = \exp ( - e^{-x})$, $F'(x) = e^{-x} F(x)$, and $F''(x) = (e^{-x}-1) F'(x)$. $\lim_{x\rightarrow \infty} \frac{F''(x)(1-F(x))}{\left\lbrace F'(x) \right\rbrace ^2} = -1$, thus this is Gumbel-type distribution by Theorem \ref{thm:gumbel}, $F \in D(\Lambda)$. If $g(x) = e^x(e^{e^{-x}} -1 )$, then normalizing constants are obtained as $b_K = - \log ( -\log(1-{1}/{K})) \sim 
\log K$, and $a_K = g(b_K) = ({1-F(b_K)})/{(\exp(-b_K) F(b_K))} = 
1+ {1}/{K} + o(\frac{1}{K})$.
Accordingly, $\mathrm{E} M_K = - \Gamma^{(1)}(1) \cdot (1+ \frac{1}{K}) + \log K +o({1}/{K})$.

Its hazard rate is derived as $h(x) = \frac{F'(x)}{1-F(x)} = \frac{e^{-x}}{\exp (e^{-x})-1}$, and since it increases monotonically and converges to 1 as $x$ goes to infinity, it has an asymptotically tight bound 1.
\subsubsection{Gamma distribution}
For $x>0$, the first derivative and the second derivative of distribution function are given as
$F'(x) = ({x^{\alpha-1} e^{-x}})/{\Gamma(\alpha)}$ and $F''(x) = -F'(x) (1+(\alpha-1) /t) \sim -F'(x)$. It satisfies $\frac{F''(1-F(x))}{(F'(x))^2} \sim - \frac{1-F(x)}{F'(x)} \rightarrow -1$ so it is Gumbel-type by Theorem \ref{thm:gumbel}, $F \in D(\Lambda)$. It has $g(x) \rightarrow 1$ and thus $a_K = 1$. Since $F'(b_K) \sim 1-F(b_K) = 1/K$, $(\alpha-1) \log b_K - b_K - \log \Gamma(\alpha) = - \log K$. Thus, we have $b_K = \log K + (\alpha-1) \log \log K - \log \Gamma(\alpha)$. Accordingly, $\mathrm{E} M_K = - \Gamma^{(1)}(1) +  \log K + (\alpha-1) \log \log K- \log \Gamma(\alpha)$. 

Its hazard function is expressed by $h(x) = (x^{\alpha-1} \exp (-x)) / [\int_x^{\infty} t^{\alpha-1} \exp (-t) dt]$. It increases monotonically and converges to 1, and thus has an asymptotically tight bound 1.  
\subsubsection{Weibull distribution}
The Weibull distribution function and its first derivative are obtained as as $F(x) = 1 - \exp(-(x+1)^{\alpha}+1)$ and $F'(x) = \alpha (x+1)^{\alpha -1} (1-F(x))$. Its second derivative is $(\frac{\alpha-1}{x+1} - \alpha(x+1)^{\alpha-1}) \cdot F'(x)$. The second condition in Theorem \ref{thm:gumbel} is satisfied, and thus $F \in D(\Lambda)$ and $g(x) = x^{-{\alpha}+1}/\alpha$. Corresponding normalizing constants are derived as $b_K = (1 + \log K)^{1/{\alpha}}-1 \sim (\log K)^{1/{\alpha}}$ and $a_K = g(b_K) = (\log K)^{1/{\alpha}-1}/\alpha$. So, $\mathrm{E} M_K = - \Gamma^{(1)}(1) \cdot   (\log K)^{1/{\alpha}-1}/\alpha + (\log K)^{1/{\alpha}}$. 

Its hazard rate function is $h(x) = \alpha (x+1)^{\alpha-1}$ for $x \ge 0$. If $\alpha > 1$, it increases monotonically and becomes unbounded. If the case for $\alpha \le 1$ is only considered, then the hazard rate is tightly bounded by $\alpha$.
\subsubsection{Fr{\'e}chet distribution}
The first derivative of Fr{\'e}chet distribution function is $F'(x) = \exp(-x^{-\alpha}) \alpha x^{-\alpha-1}$ for $x>0$ and the second condition in Theorem \ref{thm:frechet} is satisfied as $\lim_{x\rightarrow \infty} \frac{x F'(x)}{1-F(x)} = \lim_{x\rightarrow \infty} \frac{\alpha x^{-\alpha}}{\exp (x^{-\alpha})-1} \rightarrow \alpha$. Thus, it is Fr{\'e}chet-type distribution ($\Phi_{\alpha}$) so that $b_K = 0$ and $a_K =[ - \log (1- 1/K)]^{-1/\alpha} = [1/K + o(1/K)]^{-1/\alpha} \sim K^{1/\alpha}$. So, $\mathrm{E} M_K = \Gamma(1-1/\alpha) \cdot K^{1/\alpha}$. 

The hazard rate is $h(x) = \alpha x^{-\alpha-1} \frac{1}{\exp (x^{-\alpha})-1}$. It is already known that supremum of hazard is upper bound by $2\alpha$ in Appendix D.2.1 in \citet{abernethy2015fighting}. Regarding the lower bound of a hazard rate, $\sup_{x >0} h(x) \ge h(1) = \alpha /(e-1)$.
\subsubsection{Pareto distribution}
The modified Pareto distribution function is $F(x) = 1 - \frac{1}{(1+x)^{\alpha}}$ for $x\ge 0$. The second condition in Theorem \ref{thm:frechet} is met as $\lim_{x\rightarrow \infty} \frac{x F'(x)}{1-F(x)} = \lim_{x\rightarrow \infty} \frac{\alpha x}{1+x} \rightarrow \alpha>1$. Thus, it is Fr{\'e}chet-type distribution ($\Phi_{\alpha}$), and has normalizing constants, $b_K = 0$ and $a_K = K^{1/\alpha} -1$. Accordingly, $\mathrm{E} M_K \approx  \Gamma(1-1/\alpha) \cdot (K^{1/\alpha}-1)$. 

Its hazard rate is $h(x) = \frac{\alpha}{1+x}$ for $x\ge 0$ so that it is tightly bounded by $\alpha$.

\section{Two-armed Bandit setting} \label{app:twoarm}
\subsection{Shannon entropy}
There is a mapping between $\mathcal{R}$ and $F_{\mathcal{D}^{\star}}$,
\begin{equation}\label{eq:map}
\mathcal{R}(w) - \mathcal{R}(0) = - \int_0^w F^{-1}_{\mathcal{D}^{\star}}(1-z) dz.
\end{equation}
Let $\mathcal{R}(w)$ be one-dimensional Shannon entropic regularizer, 
$\mathcal{R}(w) = -w \log w - (1-w) \log ( 1-w)$ for $w \in (0,1)$ and its first derivative is $\mathcal{R}'(w) = \log \frac{1-w}{w} =F^{-1}_{\mathcal{D}^{\star}}(1-w)$.
Then $F_{\mathcal{D}^{\star}}(z) = \frac{\exp (z)}{1 + \exp (z)}$, which can be interpreted as the difference of two Gumbel distribution as follows,
\begin{align*}
\mathrm{P}(\arg\max_{w \in \Delta_1} \langle w, (G_1+Z_1, G_2,+Z_2)\rangle =1)   &= \mathrm{P}( G_1+Z_1 > G_2+Z_2) ) \\
      &= \mathrm{P}( Y > G_2 -G_1)\quad \text {where } Y = Z_1-Z_2 \sim \mathcal{D}^{\star} \\
      & = 1 - F_{\mathcal{D}^{\star}}(G_2-G_1)= \frac{\exp(G_1)}{\exp (G_1) + \exp (G_2)}
\end{align*}
If $Z_1, Z_2 \sim Gumbel(\alpha, \beta)$ and are independent, then $Z_1-Z_2 \sim Logistic(0, \beta)$. Therefore, the perturbation, $F_{\mathcal{D}^{\star}}$ is not distribution function for Gumbel, but Logistic distribution which is the difference of two i.i.d Gumbel distributions. Interestingly, the logistic distribution turned out to be also Gumbel types extreme value distribution as Gumbel distribution. It is naturally conjectured that the difference between two i.i.d Gumbel types distribution with exponential tail decay must be Gumbel types as well. The same holds for Fr{\'e}chet-type distribution with polynomial tail decay. 

\subsection{Tsallis entropy}
Theorem \ref{thm:tsallisfail} states that there does not exist a perturbation that gives the choice probability function same as that from FTRL via Tsallis entropy when $K \ge 4$. In two-armed setting, however, there exists a perturbation equivalent to Tsallis entropy and this perturbation naturally yields an optimal perturbation based algorithm. 

Let us consider Tsallis entropy regularizer in one dimensional decision set expressed by $R(w) = \frac{1}{1-\alpha}  ( -1 + w^{\alpha} +  (1-w)^{\alpha})$  for $w \in (0,1)$ and its first derivative is $R'(w) = \frac{\alpha}{1-\alpha} ( w^{\alpha-1} -(1-w)^{\alpha-1}) = F_{\mathcal{D}^{\star}}^{-1}(1-w)$. If we set $u = 1-w$, then the implicit form of distribution function and density function are given as
$
F_{\mathcal{D}^{\star}}(\frac{\alpha}{1-\alpha} ( (1-u)^{\alpha-1} -u^{\alpha-1}   ))= u$ and $f_{\mathcal{D}^{\star}}(\frac{\alpha}{1-\alpha} ( (1-u)^{\alpha-1} -u^{\alpha-1}   )) = \frac{1}{\alpha ((1-u)^{\alpha-2} + u^{\alpha-2})}$. As $u$ converges to 1, then $z = \frac{\alpha}{1-\alpha} ( (1-u)^{\alpha-1} -u^{\alpha-1})$ goes to positive infinity. This distribution satisfies the second condition in Theorem \ref{thm:frechet} so that it turns out to be Fr{\'e}chet-type.
$$
\lim_{z \rightarrow \infty} \frac{z f_{\mathcal{D}^{\star}}(z)}{1-F_{\mathcal{D}^{\star}}(z)} = \lim_{u \rightarrow1} \frac{\frac{\alpha}{1-\alpha} ( (1-u)^{\alpha-1} -u^{\alpha-1})}{(1-u) \times \alpha ((1-u)^{\alpha-2} + u^{\alpha-2})} = \frac{1}{1-\alpha}.
$$
If the conjecture above holds, the optimal perturbation that corresponds to Tsallis entropy regularizer must be also Fr{\'e}chet-type distribution in two armed bandit setting. This result strongly support our conjecture that the perturbation in an optimal FTPL algorithm must be Fr{\'e}chet-type.

\end{document}